\title{
GeoPhy: Differentiable Phylogenetic Inference via Geometric Gradients of Tree Topologies
}
\author{%
  Takahiro Mimori \\
  Waseda University \\
  RIKEN AIP \\
  \text{takahiro.mimori@aoni.waseda.jp}
  \And
  Michiaki Hamada \\
  Waseda University \\
  AIST-Waseda CBBD-OIL \\
  Nippon Medical School \\
  \text{mhamada@waseda.jp}
}
\begin{document}

\maketitle

\begin{abstract}
   Phylogenetic inference, grounded in molecular evolution models, is essential for understanding the evolutionary relationships in biological data.
   Accounting for the uncertainty of phylogenetic tree variables, which include tree topologies and evolutionary distances on branches, is crucial for accurately inferring species relationships from molecular data and tasks requiring variable marginalization.
   Variational Bayesian methods are key to developing scalable, practical models; however, it remains challenging to conduct phylogenetic inference without restricting the combinatorially vast number of possible tree topologies.
   In this work, we introduce a novel, fully differentiable formulation of phylogenetic inference that leverages a unique representation of topological distributions in continuous geometric spaces.
   Through practical considerations on design spaces and control variates for gradient estimations, our approach, GeoPhy, enables variational inference without limiting the topological candidates.
   In experiments using real benchmark datasets, GeoPhy significantly outperformed other approximate Bayesian methods that considered whole topologies.
\end{abstract}

\section{Introduction}

Phylogenetic inference, the reconstruction of tree-structured evolutionary relationships between biological units, such as genes, cells, individuals, and species ranging from viruses to macro-organisms, is a fundamental problem in biology.
As the phylogenetic relationships are often indirectly inferred from molecular observations, including DNA, RNA and protein sequences,
Bayesian inference has been an essential tool to quantify the uncertainty of phylogeny.
However, due to the complex nature of the phylogenetic tree object,
which involve both a discrete topology and dependent continuous variables for branch lengths,
the default approach for the phylogenetic inference has typically been an 
Markov-chain Monte Carlo (MCMC) method \cite{ronquist2012mrbayes}, enhanced with domain-specific techniques such as a mixed strategy for efficient exploration of tree topologies.

As an alternative approach to the conventional MCMCs,
a variational Bayesian approach for phylogenetic inference was proposed in \cite{zhang2019variational} (VBPI),
which has subsequently been improved in the expressive powers of topology-dependent branch length distributions \cite{VBPI-NF,zhang2023learnable}. %
Although these methods have presented accurate joint posterior distributions of topology and branch lengths for real datasets, 
they required reasonable preselection of candidate tree topologies to avoid a combinatorial explosion in the number of weight parameters
beforehand.
There have also been proposed variational approaches \cite{moretti2021variational,koptagel2022vaiphy} on top of the combinatorial sequential Monte Carlo method (CSMC\cite{wang2015bayesian}),
which iteratively updated weighted topologies without the need for the preselection steps. 
However, the fidelity of the joint posterior distributions was still largely behind that of MCMC and VBPI as reported in \cite{koptagel2022vaiphy}.

In this work, we propose a simple yet effective scheme for parameterizing a binary tree topological distribution with a transformation of continuous distributions.
We further formulate a novel differentiable variational Bayesian approach named GeoPhy
\footnote{Our implementation is found at \url{https://github.com/m1m0r1/geophy}}
to approximate the posterior distribution of the tree topology and branch lengths
without preselection of candidate topologies.
By constructing practical topological models based on Euclidean and hyperbolic spaces, and employing variance reduction techniques for the stochastic gradient estimates of the variational lower bound, we have developed a robust algorithm that enables stable, gradient-based phylogenetic inference. 
In our experiments using real biological datasets,
we demonstrate that GeoPhy significantly outperforms other approaches, all without topological restrictions, in terms of the fidelity of the marginal log-likelihood (MLL) estimates to gold-standard provided with long-run MCMCs.

Our contributions are summarized as follows:
\begin{itemize}
\item We propose a representation of tree topological distributions based on continuous distributions,
thereby we construct a fully differentiable method named GeoPhy for variational Bayesian phylogenetic inference without restricting the support of the topologies.
\item By considering variance reduction in stochastic gradient estimates and simple construction of topological distributions on Euclidean and hyperbolic spaces, GeoPhy offers unprecedently close marginal log-likelihood estimates to gold standard MCMC runs, outperforming comparable approaches without topological preselections.
\end{itemize}

\section{Background}

\subsection{Phylogenetic models}
Let $\tau$ be represent an unrooted binary tree topology with $N$ leaf nodes (tips),
and let $B_\tau$ denote a set of evolutionary distances defined on each of the branches of $\tau$.
A phylogenetic tree $(\tau, B_\tau)$ represents
an evolutionary relationship between $N$ species, 
which is inferred from molecular data, such as DNA, RNA or protein sequences obtained for the species.
Let $Y = \{ Y_{ij} \in \Omega \}_{1\leq i \leq N, 1 \leq j \leq M}$ 
be a set of aligned sequences with length $M$ from the species, 
where $Y_{ij}$ denote a character (base) of the $i$-th sequence at $j$-th site, %
and is contained in a set of possible bases $\Omega$.
For DNA sequences, $\Omega$ represents a set of 4-bit vectors, where each bit represents 'A', 'T', 'G', or 'C'.
A likelihood model of the sequences $P(Y | \tau, B_\tau)$ is determined based on evolutionary assumptions.
In this study, we follow a common practice for method evaluations \cite{zhang2019variational,zhang2023learnable} as follows: %
$Y$ is assumed to be generated from 
a Markov process along the branches of $\tau$ in a site-independent manner;
The base mutations are assumed to follow the Jukes-Cantor model \cite{jukes1969evolution}.
The log-likelihood $\ln P(Y | \tau, B_\tau)$ can be calculated using Felsenstein's pruning algorithm \cite{felsenstein1973maximum},
which is also known as the sum-product algorithm, and differentiable with respect to $B_\tau$.

\subsection{Variational Bayesian phylogenetic inference}
The variational inference problem for phylogenetic trees, which seeks to approximate the posterior probability $P(\tau, B_\tau | Y)$,
is formulated as follows:
\begin{align}
  \min_{Q}
  D_{\mathrm{KL}}\left(
    Q(\tau) Q(B_\tau | \tau)
    \middle\Vert
    P(\tau, B_\tau | Y)
  \right),
\label{eq:vbpi} 
\end{align}
where $D_{\mathrm{KL}}$, $Q(\tau)$ and $Q(B_\tau | \tau)$ denote the Kullback-Leibler divergence, a variational tree topology distribution,
and a variational branch length distribution, respectively.
The first variational Bayesian phylogenetic inference method (VBPI) was proposed by \citet{zhang2019variational}, which has been successively improved for the expressiveness of $Q(B_\tau | \tau)$ \cite{VBPI-NF,zhang2023learnable}.
For the expression of variational topology mass function $Q(\tau)$,
they all rely on a subsplit Bayesian network (SBN) \cite{NIPS2018_7418},
which represents tree topology mass function $Q(\tau)$
as a product of conditional probabilities of splits (i.e., bipartition of the tip nodes) given their parent splits.
However, SBN necessitates a preselection of the likely set of tree topologies,
which hinders an end-to-end optimization strategy of the distribution over all the topologies and branch lengths.

\subsection{Learnable topological features (LTFs)} %

\citet{zhang2023learnable} introduced a deterministic embedding method for tree topologies, termed learnable topological features (LTFs),
which provide unique signatures for each tree topology.
Specifically, 
given a set of nodes $V$ and edges $E$ of a tree topology $\tau$,
a node embedding function $f_{\mathrm{emb}}: V \to \mathbb{R}^d$ is determined
to minimize the Dirichlet energy defined as follows:
\begin{align}
  \sum_{(u,v) \in E} \Verts{f_{\mathrm{emb}}(u) - f_{\mathrm{emb}}(v)}^2.
\end{align}
When feature vectors for the tip nodes are determined,
\citet{zhang2023learnable} showed that the optimal $f_{\mathrm{emb}}$ for the interior nodes can be efficiently computed with two traversals of the tree topology.
They also utilized $f_{\mathrm{emb}}$ to parameterize the conditional distribution of branch length given tree topology $Q(B_\tau | \tau)$ by using graph neural networks
with the input node features $\{ f_{\mathrm{emb}}(v) | v \in V \}$.

\subsection{Hyperbolic spaces and probability distributions}

Hyperbolic spaces are known to be able to embed hierarchical data with less distortion in considerably fewer dimensions than Euclidean spaces \cite{sala2018representation}.
This property has been exploited to develop various data analysis applications such as link predictions for relational data \cite{nickel2017poincare}, %
as well as phylogenetic analyses \cite{matsumoto2021novel,macaulay2022fidelity}.
For representing the uncertainty of embedded data on hyperbolic spaces,
the wrapped normal distribution proposed in \cite{nagano2019wrapped} has appealing characteristics: it is easy to sample from and evaluate the density of the distribution.

The Lorentz model $\mathbb{H}^{d}$ is a representation of the $d$-dimensional hyperbolic space, which is a submanifold of $d+1$ dimensional Euclidean space.
Denote $\mu \in \mathbb{H}^{d}$ and $\Sigma \in \mathbb{R}^{d \times d}$ be a location and scale parameters, respectively,
a random variable $z \in \mathbb{H}^{d}$ that follows a wrapped normal distribution $\mathcal{WN}(\mu, \Sigma)$ is defined as follows:
\begin{align}
  u \sim \mathcal{N}(0, \Sigma), \quad
  z = \exp_\mu \circ \mathrm{PT}_{\mu^o \to \mu}(u),
\end{align}
where $\mu^o$, $\exp_\mu: T_\mu\mathbb{H}^d \to \mathbb{H}^d$, and $\mathrm{PT}_{\mu^o \to \mu}: T_{\mu^o} \mathbb{H}^d \to T_\mu\mathbb{H}^d$
denote the origin, the exponential map, and the parallel transport defined on the Lorentz model.
Also, the probability density $\mathcal{WN}(\mu, \Sigma)$ has a closed form and is differentiable with respect to the parameters $\mu, \Sigma$.
More details and properties are summarized in Appendix \ref{apx:background}.

\section{Proposed methods}

\begin{figure}[t]
  \centering
   \rule[-.5cm]{0cm}{4cm}
   \includegraphics[trim=80pt 480pt 100pt 50pt, clip, width=0.95\textwidth]{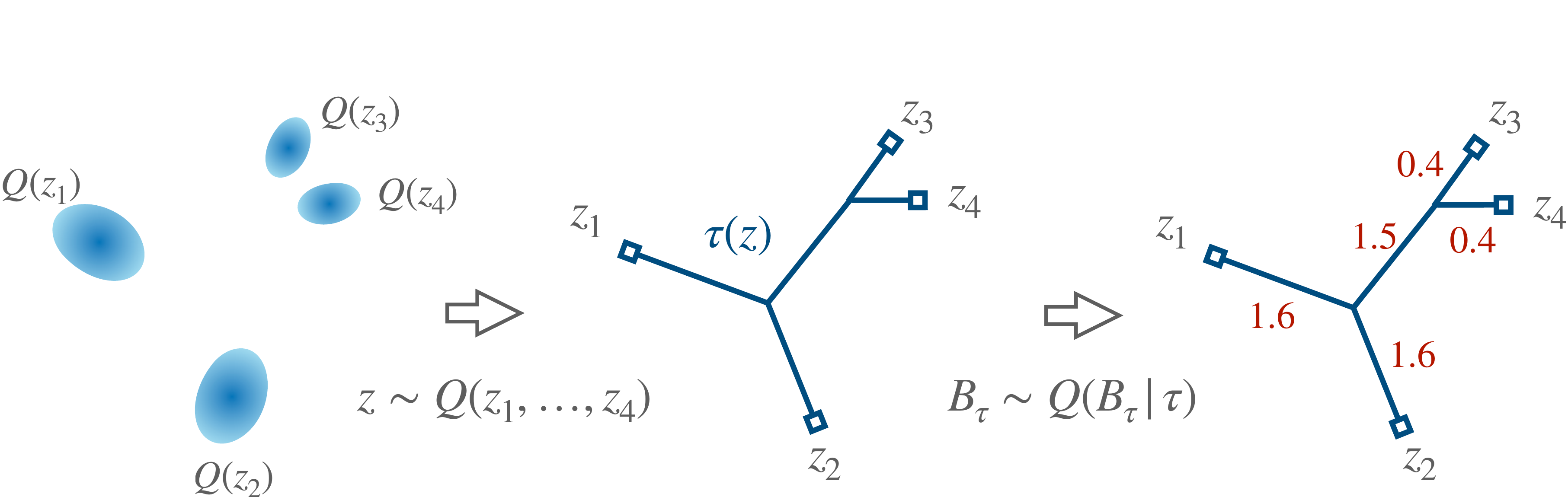}
  \caption{
    The proposed scheme for constructing a variational distribution $Q(\tau, B_\tau)$ of a tree topology and branch lengths by using a distribution $Q(z)$ defined on the continuous space.
    {\bf Left}: The marginal distributions of four tip nodes $Q(z_1), \dots, Q(z_4)$.
    {\bf Middle}: Coordinates of the tip nodes $z = \{z_1, \dots, z_4\}$ sampled from the distribution $Q(z)$,
    and a tree topology $\tau(z)$ determined from $z$.
    {\bf Right}: A set of branch lengths $B_\tau$ (red figures) of the tree $\tau$ is sampled from a topology dependent distribution $Q(B_\tau | \tau)$.
  }
  \label{fig:dpi_scheme} %
\end{figure}

\subsection{Geometric representations of tree topology ensembles}
\label{sec:geo_topology}

Considering the typically infeasible task of parameterizing the probability mass function of unrooted tree topologies $\mathcal{T}$,
which requires $(2N - 5)!! - 1$ degrees of freedom,
we propose an alternative approach.
We suggest constructing the mass function $Q(\tau)$ through a transformation of a certain probability density $Q(z)$ over a continuous domain $\mathcal{Z}$, as follows:
\begin{align}
  Q(\tau) := \mathbb{E}_{Q(z)}[\mathbb{I}[\tau = \tau(z)]],
\label{eq:geo_topology}
\end{align}
where $\tau: \mathcal{Z} \to \mathcal{T}$ denotes a deterministic link function that maps $N$ coordinates to the corresponding tree topology (Fig. \ref{fig:dpi_scheme}).
Note that we have overloaded $\tau$ to represent both a variable and function for notational simplicity.
An example of the representation space $\mathcal{Z}$ is a product of the Euclidean space $\mathbb{R}^{N \times d}$ or hyperbolic space $\mathbb{H}^{N \times d}$, where $d$ denotes the dimension of each tip's representation coordinate.
For the link function, we can use $\tau(z) = T_{\mathrm{NJ}} \circ D(z)$, where $D: \mathcal{Z} \to \mathbb{R}^{N \times N}$ denotes a function that takes $N$ coordinates and provides a distance matrix between those based on a geometric measure such as the Euclidean or hyperbolic distance. $T_{\mathrm{NJ}}: \mathbb{R}^{N \times N} \to \mathcal{T}$ denotes a map that takes this distance matrix and generates an unrooted binary tree topology of their phylogeny, determined using the Neighbor-Joining (NJ) algorithm \cite{saitou1987neighbor}.
While the NJ algorithm offers a rooted binary tree topology accompanied by estimated branch lengths, we only use the topology information and remove the root node from it to obtain the unrooted tree topology $\tau \in \mathcal{T}$.

\subsection{Derivation of variational lower bound}

Given a distribution of tip coordinates $Q(z)$ and an induced tree topology distribution $Q(\tau)$ according to equation \eqref{eq:geo_topology}, the variational lower bound \eqref{eq:vbpi} is evaluated as follows:
\begin{align}
  \mathcal{L}[Q]
  = 
  \E_{Q(z)} \left[
    \E_{Q(B_\tau | \tau(z))} \left[
      \ln \frac{P(Y, B_\tau | \tau(z))}{Q(B_\tau | \tau(z))}
    \right]
    + \ln \frac{P(\tau(z))}{Q(\tau(z))}
  \right].
\label{eq:dpi_lb1}
\end{align}
Thanks to the deterministic mapping $\tau(z)$, 
we can obtain an unbiased estimator of $\mathcal{L}[Q]$ by sampling from $Q(z)$ without summing over the combinatorial many topologies $\mathcal{T}$.
However, even when the density $Q(z)$ is computable, the evaluation of $\ln Q(\tau)$ remains still infeasible according to the definition \eqref{eq:geo_topology}.
We resolve this issue by introducing the second lower bound with respect to a conditional variational distribution $R(z | \tau)$ as follows:
\begin{align}
  \mathcal{L}[Q, R]
  = 
  \E_{Q(z)} \left[
    \E_{Q(B_\tau | \tau(z))} \left[
      \ln \frac{P(Y, B_\tau | \tau(z))}{Q(B_\tau | \tau(z))}
    \right]
    + \ln \frac{P(\tau(z)) R(z | \tau(z))}{Q(z)}
  \right].
\label{eq:dpi_lb2}
\end{align}
This formulation is similar in structure to the variational auto-encoders \cite{kingma2013auto}, %
where $Q(\tau | z)$ and $R(z | \tau)$ are viewed as a probabilistic decoder and encoder of data $\tau$, respectively. However, unlike typical auto-encoders, we design $Q(\tau | z)$ to be deterministic, and instead adjust $Q(z)$ to approximate $Q(\tau)$.

\begin{prop}
\label{prop:dpi_lbs}
Assuming that $\mathrm{supp} \, R(z | \tau) \supseteq \mathrm{supp} \, Q(z | \tau)$,
the inequality $\ln P(Y) \geq \mathcal{L}[Q] \geq \mathcal{L}[Q, R]$ holds,
where $\mathrm{supp}$ denotes the support of distribution.
The first and the second equality holds when
$Q(\tau, B_\tau) = P(\tau, B_\tau | Y)$
and $R(z | \tau) = Q(z | \tau)$, respectively.
\end{prop}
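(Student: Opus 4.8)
The plan is to prove the two inequalities separately: the first, $\ln P(Y) \geq \mathcal{L}[Q]$, is a standard evidence-lower-bound estimate, while the second, $\mathcal{L}[Q] \geq \mathcal{L}[Q,R]$, I would obtain by computing the gap explicitly and recognizing it as an averaged Kullback--Leibler divergence. For the first inequality, I would invoke the familiar ELBO identity. Since $Q(\tau)Q(B_\tau|\tau)$ is a genuine joint distribution over $(\tau, B_\tau)$, one can write $\ln P(Y) = \mathcal{L}[Q] + D_{\mathrm{KL}}(Q(\tau)Q(B_\tau|\tau)\,\|\,P(\tau, B_\tau|Y))$, where $\mathcal{L}[Q]$ of equation \eqref{eq:dpi_lb1} is exactly this lower bound re-expressed in the $z$-parameterization through the deterministic link $\tau(z)$. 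Non-negativity of the KL term then gives the inequality, with equality precisely when $Q(\tau, B_\tau) = P(\tau, B_\tau|Y)$.

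For the second inequality, the key step is to subtract \eqref{eq:dpi_lb2} from \eqref{eq:dpi_lb1} directly. The branch-length and topology-prior terms cancel, leaving $\mathcal{L}[Q] - \mathcal{L}[Q,R] = \E_{Q(z)}\!\left[\ln \frac{Q(z)}{Q(\tau(z))\,R(z|\tau(z))}\right]$. The crucial identity is that, because $Q(\tau|z) = \mathbb{I}[\tau = \tau(z)]$ is deterministic, Bayes' rule gives $Q(z|\tau) = Q(z)\,\mathbb{I}[\tau = \tau(z)]/Q(\tau)$; evaluated along $\tau = \tau(z)$ the indicator equals $1$, so $Q(z)/Q(\tau(z)) = Q(z|\tau(z))$. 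The gap therefore reduces to $\E_{Q(z)}\!\left[\ln \frac{Q(z|\tau(z))}{R(z|\tau(z))}\right]$.

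To show this is non-negative, I would partition the $z$-expectation according to the value of $\tau(z)$. Since $Q(z|\tau)$ is supported exactly on $\{z : \tau(z) = \tau\}$ and equals $Q(z)/Q(\tau)$ there, grouping the integral over $z$ by topology rewrites the gap as $\sum_\tau Q(\tau)\, D_{\mathrm{KL}}(Q(z|\tau)\,\|\,R(z|\tau)) = \E_{Q(\tau)}\!\left[D_{\mathrm{KL}}(Q(z|\tau)\,\|\,R(z|\tau))\right]$. Each KL term is non-negative and, by the assumed support condition $\mathrm{supp}\,R(z|\tau)\supseteq\mathrm{supp}\,Q(z|\tau)$, is finite rather than $+\infty$. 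This yields $\mathcal{L}[Q]\geq\mathcal{L}[Q,R]$, with equality iff each divergence vanishes, i.e. $R(z|\tau) = Q(z|\tau)$ for $Q(\tau)$-almost every $\tau$, matching the stated condition.

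The hard part will be the measure-theoretic bookkeeping in the change from the $z$-expectation to the topology-indexed sum: one must justify carefully that conditioning $Q(z)$ on the deterministic event $\{\tau(z)=\tau\}$ recovers $Q(z|\tau)$, and that the per-topology integrals reassemble into honest KL divergences. The support assumption is precisely what guarantees absolute continuity of $Q(z|\tau)$ with respect to $R(z|\tau)$, ensuring none of the divergences diverges spuriously and that the equality analysis is valid.
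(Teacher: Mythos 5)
Your proposal is correct and follows essentially the same route as the paper: the standard ELBO/KL identity for the first inequality, and the identification of the gap $\mathcal{L}[Q]-\mathcal{L}[Q,R]$ as $\E_{Q(\tau)}\left[D_{\mathrm{KL}}\left(Q(z|\tau)\,\middle\Vert\,R(z|\tau)\right)\right]$ via the key relation $Q(\tau)Q(z|\tau)=Q(z)\,\mathbb{I}[\tau=\tau(z)]$ for the second. The paper merely organizes the second step in the opposite direction (subtracting the averaged KL from the entropy term $-\E_{Q(\tau)}[\ln Q(\tau)]$ rather than computing the gap explicitly), so the two arguments coincide.
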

\begin{proof}
The first variational lower bound of the marginal log-likelihood is given as follows:
\begin{align}
  \ln P(Y) 
  &\geq \ln P(Y) - D_{\mathrm{KL}}\left[
    Q(\tau, B_\tau) \middle\Vert P(\tau, B_\tau | Y)
  \right]
  = \E_{Q(\tau, B_\tau)} \left[
    \ln \frac{
      P(Y, \tau, B_\tau)
    }{
      Q(\tau, B_\tau) 
    }
  \right] := \mathcal{L}[Q],
\end{align}
where the equality condition of the first inequality holds when $Q(\tau, B_\tau) = P(\tau, B_\tau | Y)$.
Since we have defined $Q(\tau)$ in equation \eqref{eq:geo_topology},
we can further transform the lower bound as 
$\mathcal{L}[Q]$
\begin{align}
\mathcal{L}[Q]
= \E_{Q(z)} \left[
  \sum_{\tau \in \mathcal{T}}
  \mathbb{I}[\tau = \tau(z)]
  \E_{Q(B_\tau | \tau)} \left[
    \ln \frac{P(Y, B_\tau | \tau)}{Q(B_\tau | \tau)}
  + \ln \frac{P(\tau)}{Q(\tau)}
  \right]
\right],
\end{align}
from which equation \eqref{eq:dpi_lb1} immediately follows.
Hence the inequality $\ln P(Y) \geq \mathcal{L}[Q]$ and its equality condition $Q(\tau, B_\tau) = P(\tau, B_\tau | Y)$ have been proven.

Next, the entropy term $- \E_{Q(z)}[ \ln Q(\tau(z)) ] = - \E_{Q(\tau)}[ \ln Q(\tau) ]$ in equation \eqref{eq:dpi_lb1} can be transformed to derive further lower bound as follows:
\begin{align}
- \E_{Q(\tau)}\left[
  \ln Q(\tau)
\right]
&\geq
- \E_{Q(\tau)}\left[
  \ln Q(\tau) 
  + D_\mathrm{KL}\pars{Q(z | \tau) \middle\Vert R(z | \tau)}
\right]
\nonumber
\\
&=
- \E_{Q(\tau) Q(z | \tau)} \left[
  \ln \frac{ Q(\tau) Q(z | \tau) }{ R(z | \tau) }
\right]
= - \E_{Q(z)}\left[ \ln \frac{Q(z)}{R(z | \tau(z))} \right],
\nonumber
\end{align}
where the last equality is derived by using the relation
$
\E_{Q(\tau) Q(z | \tau)}[\cdot] 
= \E_{Q(z)} [ \sum_{\tau} \mathbb{I}[\tau = \tau(z)] \cdot ]
$
and 
$Q(\tau) Q(z | \tau) = Q(z) \mathbb{I}[\tau = \tau(z)]$.
The equality condition of the first inequality holds when $R(z|\tau) = Q(z | \tau)$.
Hence, the inequality $\mathcal{L}[Q] \geq \mathcal{L}[Q, R]$ and the equality condition is proven.
\end{proof}

Similar to \citet{BurdaGS15}, we can also derive a tractable importance-weighted lower-bound of the model evidence (IW-ELBO),
which is used for estimating the marginal log-likelihood (MLL), $\ln P(Y)$, or an alternative lower-bound objective for maximization.
The details and derivations are described in Appendix \ref{apx:lower_bound_grad_est}.

\subsection{Differentiable phylogenetic inference with GeoPhy}
\label{sec:dpi_geophy}

\begin{algorithm}
\caption{GeoPhy algorithm}
\label{algo:geophy}
\begin{algorithmic}[1]
\State $\theta, \phi, \psi \gets$ Initialize variational parameters
\While{not converged}
  \State $\epsilon_z^{(1:K)}, \epsilon_B^{(1:K)}
   \gets$ Random samples from distributions $p_z$, $p_B$
  \State $z^{(1:K)} \gets h_\theta(\epsilon_z^{(1:K)})$
  \State $B_\tau^{(1:K)} \gets h_\phi(\epsilon_B^{(1:K)}, \tau(z^{(1:K)}))$
  \State $\widehat{g}_\theta, \widehat{g}_\phi, \widehat{g}_\psi \gets$ 
    Estimate the gradients
    $\nabla_\theta \mathcal{L}$,
    $\nabla_\phi \mathcal{L}$,
    $\nabla_\psi \mathcal{L}$ 
  \State $\theta, \phi, \psi \gets $ Update parameters using an SGA algorithm, given gradients $\widehat{g}_\theta, \widehat{g}_\phi, \widehat{g}_\psi$
\EndWhile
\State \Return $\theta, \phi, \psi$
\end{algorithmic}
\end{algorithm}
 
Here, we introduce parameterized distributions,
$Q_\theta(z)$, $Q_\phi(B_\tau | \tau)$, and $R_\psi(z | \tau)$,
aiming to optimize the variational objective
$\mathcal{L}[Q_{\theta, \phi}, R_\psi]$
using stochastic gradient ascent (SGA).
The framework, which we term GeoPhy, is summarized in Algorithm \ref{algo:geophy}.
While aligning with black-box variational inference \cite{ranganath2014black}
and its extension to phylogenetic inference \cite{zhang2019variational},
our unique lower bound objective enables us to optimize the distribution over entire phylogenetic trees within continuous geometric spaces.

\paragraph{Gradients of lower bound}
We derive the gradient terms
for stochastic optimization of $\mathcal{L}$,
omitting parameters $\theta,\phi,\psi$ for notational simplicity.
We assume that $Q_\theta(z)$ and $Q_\phi(B_\tau | \tau(z))$ are both reparameterizable; namely,
we can sample from these distributions as follows:
\begin{align}
  z &= h_\theta(\epsilon_z),
   \quad
  B_\tau = h_\phi(\epsilon_B, \tau(z)),
\end{align}
where
the terms $\epsilon_z \sim p_z(\epsilon_z)$ and $\epsilon_B \sim p_B(\epsilon_B)$
represent sampling from parameter-free distributions $p_z$ and $p_B$, respectively,
and the functions $h_\theta$ and $h_\phi$ are differentiable with respect to $\theta$ and $\phi$, respectively.
Although we cannot take the derivative of $\tau(z)$ with respect to $z$,
the gradient of $\mathcal{L}$ with respect to $\theta$ is evaluated as follows:
\begin{align}
  \nabla_\theta \mathcal{L}
  &= \E_{Q_\theta(z)}\left[
    \pars{\nabla_\theta \ln Q_\theta(z)}
    \left(
      \E_{Q_\phi(B_\tau | \tau(z))} \left[
        \ln \frac{
            P(Y, B_\tau | \tau(z))
            }{Q_\phi(B_\tau | \tau(z))}
      \right]
      + \ln P(\tau(z)) R_\psi(z | \tau(z))
    \right)
  \right]
  \nonumber
  \\
  &\quad
  + \nabla_\theta \mathbb{H}[Q_\theta(z)],
  \label{eq:elbo_grad_theta}
\end{align}
where $\mathbb{H}$ denotes the differential entropy.
The gradient of $\mathcal{L}$ with respect to $\phi$ and $\psi$ are evaluated as follows:
\begin{align}
  \nabla_\phi \mathcal{L}
  = \E_{Q_\theta(z)} \E_{p_B(\epsilon)}
  \left[
      \nabla_\phi \ln \frac{P(Y, B_\tau=h_\phi(\epsilon, \tau) | \tau(z))}{Q_\phi(B_\tau=h_\phi(\epsilon, \tau) | \tau(z))}
  \right],
  \quad
  \nabla_\psi \mathcal{L}
  = \E_{Q_\theta(z)}
  \left[
     \nabla_\psi \ln R_\psi(z | \tau(z))
  \right],
\end{align}
where we assume a tractable density model for $R_\psi(z | \tau)$.

\paragraph{Gradient estimators and variance reduction}
From equation \eqref{eq:elbo_grad_theta}, an unbiased estimator of $\nabla_\theta \mathcal{L}$, using $K$-sample Monte Carlo samples, can be derived as follows:
\begin{align}
  \widehat{g}_\theta^{(K)}
  &= 
  \frac{1}{K} \sum_{k=1}^K \pars{
  \nabla_\theta \ln Q_\theta(z^{(k)}) \cdot f(z^{(k)}, B_\tau^{(k)})
  - \nabla_\theta \ln Q_\theta(h_\theta(\epsilon_z^{(k)}))
  },
  \label{eq:elbo_grad_est_K_theta}
\end{align}
where we denote $
\epsilon_z^{(k)} \sim p_z \, 
z^{(k)} = h_\theta(\epsilon_z^{(k)}), \,
\epsilon_B^{(k)} \sim p_B, \, 
B_\tau^{(k)} = h_\phi(\epsilon_B^{(k)}, \tau(z^{(k)}))
$, and
\begin{align}
f(z, B_\tau) := 
      \ln \frac{P(Y, B_\tau | \tau(z))}{Q_\phi(B_\tau | \tau(z))}
    + \ln P(\tau(z)) R_\psi(z | \tau(z)).
\end{align}
Note that we explicitly distinguish $z$ and $h_\theta(\epsilon_z)$ to indicate the target of differentiation with respect to $\theta$.

In practice, it is known to be crucial to reducing the variance of the gradient estimators 
proportional to the score function $\nabla_\theta \ln Q_\theta$
to make optimization feasible. %
This issue is addressed by introducing a control variate $c(z)$, which has a zero expectation $\E_z[c(z)] = 0$ and works to reduce the variance of the term by subtracting it from the gradient estimator. 
For the case of $K > 1$,
it is known to be effective to use simple Leave-one-out (LOO) control variates \cite{kool2019buy,richter2020vargrad}
as follows:
\begin{align}
  \widehat{g}_{\theta,\mathrm{LOO}}^{(K)}
  &= 
  \frac{1}{K} \sum_{k=1}^K \left[
  \nabla_\theta \ln Q_\theta(z^{(k)}) \cdot 
    \pars{
      f(z^{(k)}, B_\tau^{(k)})
      - \overline{f_k}(z^{(\backslash k)}, B_\tau^{(\backslash k)})
    }
  - \nabla_\theta \ln Q_\theta(h_\theta(\epsilon_z^{(k)}))
  \right],
  \label{eq:elbo_grad_est_K_theta_loo}
\end{align}
where we defined
$  \overline{f_k}(z^{(\backslash k)}, B_\tau^{(\backslash k)})
  := \frac{1}{K - 1} \sum_{k'=1, k'\neq k}^K f(z^{(k)}, B_\tau^{(k)})
$,
which is a constant with respect to the $k$-the sample.

We can also adopt a gradient estimator called LAX \cite{grathwohl2018backpropagation},
which uses a learnable surrogate function $s_\chi(z)$ differentiable in $z$ to form control variates as follows:
\begin{align}
\widehat{g}_{\theta,\mathrm{LAX}} =
\pars{\nabla_\theta \ln Q_\theta(z)} \pars{
  f(z, B_\tau) - s_\chi(z)
}
+ \nabla_\theta s_\chi(h_\theta(\epsilon_z))
- \nabla_\theta \ln Q_\theta(h_\theta(\epsilon_z)),
\end{align}
where we write the case with $K=1$ for simplicity.
To optimize the surrogate function $s_\chi$, 
we use the estimator $\widehat{g}_\chi = \nabla_\chi \angles{ \widehat{g}_{\theta}^2 }$
\footnote{ $\widehat{g}_\chi$ is an unbiased estimator of
$\nabla \angles{ \mathbb{V}_{Q(z)}[(\widehat{g}_{\theta})] }$ as shown in
\cite{grathwohl2018backpropagation}.},
where $\angles{\cdot}$ denotes the average of the vector elements.
We summarize the procedure with LAX estimator in Algorithm \ref{algo:geophy_with_lax}.

We also consider other options for the gradient estimators, such as the combinations of the LOO and LAX estimators,
the gradient of IW-ELBO, and its variance-reduced estimator called VIMCO \cite{mnih2016variational} in our experiments.
For the remaining gradients terms  $\nabla_\phi \mathcal{L}$ and $\nabla_\psi \mathcal{L}$,
we can simply employ reparameterized estimators as follows:
\begin{align}
  \widehat{g}_\phi &=
      \nabla_\phi \ln \frac{P(Y, B_\tau=h_\phi(\epsilon_B, \tau) | \tau(z))}{Q_\phi(B_\tau=h_\phi(\epsilon_B, \tau) | \tau(z))},
  \quad
  \widehat{g}_\psi = 
     \nabla_\psi \ln R_\psi(z | \tau(z)),
\end{align}
where we denote $\epsilon_B \sim p_B$ and $z \sim Q_\theta$.
More details of the gradient estimators are summarized in Appendix \ref{apx:lower_bound_grad_est}.

\paragraph{Variational distributions}
To investigate the basic effectiveness of GeoPhy algorithm,
we employ simple constructions for the variational distributions $Q_\theta(z)$, $Q_\phi(B_\tau | \tau)$, and $R_\psi(z | \tau)$.
We use an independent distribution for each tip node coordinate,
i.e. $Q_\theta(z) = \prod_{i=1}^N Q_{\theta_i}(z_i)$,
where we use a $d$-dimensional normal or wrapped normal distribution for the coordinates of each tip node $z_i$.
For the conditional distribution of branch lengths given tree topology, $Q_\phi(B_\tau | \tau)$,
we use the diagonal lognormal distribution
whose location and scale parameters are given as a function of the LTFs of the topology $\tau$, which is comprised of GNNs as proposed in \cite{zhang2023learnable}.
For the model of $R_\psi(z | \tau)$, 
we also employ an independent distribution:
$R_\psi(z | \tau) = \prod_{i=1}^N R_{\psi_i}(z_i | \tau)$,
where, we use the same type of distribution as $Q_{\theta_i}(z_i)$, independent of $\tau$.

\section{Related work}

\paragraph{Differentiability for discrete optimization}
Discrete optimization problems often suffer from the lack of informative gradients of the objective functions. 
To address this issue, continuous relaxation for discrete optimization has been actively studied,
such as a widely-used reparameterization trick with the Gumbel-softmax distribution \cite{jang2016categorical,maddison2016concrete}.
Beyond categorical variables, recent approaches have further advanced the continuous relaxation techniques to more complex discrete objects, including spanning trees \cite{struminsky2021leveraging}.
However, it is still nontrivial to extend such techniques to the case of binary tree topologies.
As outlined in equation \eqref{eq:geo_topology},
we have introduced 
a distribution over binary tree topologies $\mathcal{T}$ derived from continuous distributions $Q(z)$.
This method facilitates a gradient-based optimization further aided by variance reduction techniques.

\paragraph{Gradient-based algorithms for tree optimization}
For the hierarchical clustering (HC),
which reconstructs a tree relationship based on the distance measures between samples,
gradient-based algorithms \cite{monath2019gradient,chami2020trees,chien2022hyperaid} have been proposed based on Dasgupta's cost function \cite{dasgupta2016cost}.
In particular, \citet{chami2020trees} proposed to 
decode tree topology from hyperbolic coordinates
while the optimization is performed for a relaxed cost function, which is differentiable with respect to the coordinates.
However, 
these approaches are not readily applicable to
more general problems, including phylogenetic inference,
as their formulations depend on the specific form of the cost functions.

\paragraph{Phylogenetic analysis in hyperbolic space}
The approach of embedding phylogenetic trees into hyperbolic spaces has been explored for visualization and an interpretation of novel samples with existing phylogeny \cite{matsumoto2021novel,jiang2022phylogenetic}.
For the inference task, a maximum-likelihood approach was proposed in \cite{wilson2021learning},
which however assumed a simplified likelihood function of pairwise distances.
A recent study \cite{macaulay2022fidelity} 
proposed an MCMC-based algorithm for sampling $(\tau, B_\tau)$,
which were linked from coordinates $z$ using the NJ algorithm \cite{saitou1987neighbor}.
However, there remained the issue of an unevaluated Jacobian determinant, which posed a challenge in evaluating inference objectives.
Given that we only use topology $\tau$ as described in equation \eqref{eq:geo_topology}, the variational lower bound for the inference can be unbiasedly evaluated through sampling, as shown in Proposition \ref{prop:dpi_lbs}.

\section{Experiments}
We applied GeoPhy to perform phylogenetic inference on
biological sequence datasets of 27 to 64 species compiled in \citet{lakner2008efficiency}.

\paragraph{Models and training}
As a prior distribution of $P(\tau)$ and $P(B_\tau | \tau)$, 
we assumed a uniform distribution over all topologies, and an exponential distribution $\mathrm{Exp}(10)$ independent for all branches, respectively, as commonly used in the literature \cite{zhang2019variational,koptagel2022vaiphy}. %
For the neural network used in the parameterization of $Q_\phi(B_\tau | \tau)$,
we employed edge convolutional operation (EDGE), which was well-performed architecture in \cite{zhang2023learnable}.
For the stochastic gradient optimizations, we used the Adam optimizer \cite{kingma2014adam} with a learning rate of 0.001. %
We trained for GeoPhy until one million
Monte Carlo tree samples were consumed for the gradient estimation of our loss function.
This number equals the number of likelihood evaluations (NLEs)
and is used for a standardized comparison of experimental runs \cite{wang2015bayesian,zhang2019variational}.
The marginal log-likelihood (MLL) value is estimated with 1000 MC samples.
More details of the experimental setup are found in Appendix \ref{apx:exp_details}.

\paragraph{Initialization of coordinates}
We initialized the mean parameters of the tip coordinate distribution $Q_\theta(z)$ with the multi-dimensional scaling (MDS) algorithm when $Q_\theta$ was given as normal distributions.
For $Q_\theta$ comprised of wrapped normal distributions, we used
the hyperbolic MDS algorithm (hMDS) proposed in \cite{sala2018representation} for the initialization.
For a distance matrix used for MDS and hMDS, we used the Hamming distance between each pair of the input sequences $Y$ as similar to \cite{macaulay2022fidelity}.
For the scale parameters, we used $0.1$ for all experiments.
For $R_\psi(z)$, we used the same mean parameters as $Q_\theta(z)$ and 1.0 for the scale parameters.

\subsection{Exploration of stable learning conditions}
\begin{figure}[t]
  \centering
  \rule[-.5cm]{0cm}{4cm}
  \includegraphics[trim=5pt 0pt 5pt 0pt, clip, width=0.32\linewidth]{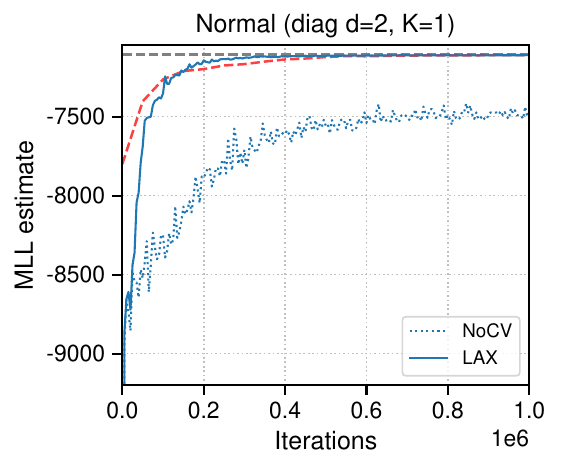}
  \includegraphics[trim=5pt 0pt 5pt 0pt, clip, width=0.32\linewidth]{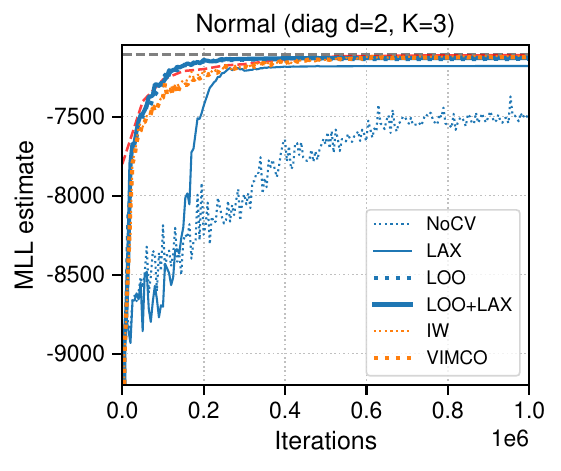}
  \includegraphics[trim=5pt 0pt 5pt 0pt, clip, width=0.32\linewidth]{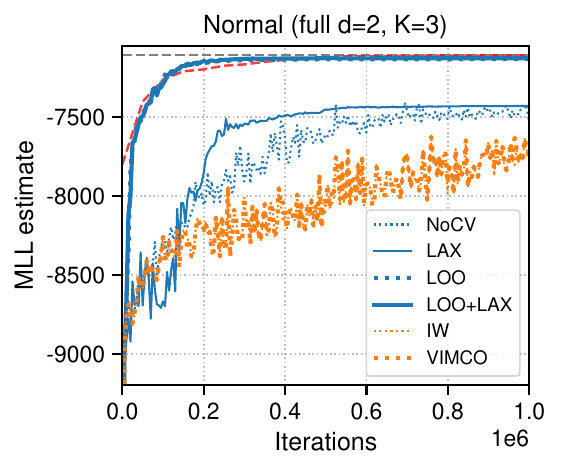}
   \rule[-.5cm]{2cm}{0cm}
  \caption{Comparison of marginal log-likelihood (MLL) estimates for DS1 dataset with different control variates.
  For a variational distribution, $Q(z)$, an independent two-dimensional Normal distribution with a diagonal (diag) or full covariance matrix was used for each tip node.
  $K$ stands for the number of Monte Carlo samples used for gradient estimation.
  For reference, we show the mean MLL value (gray dashed lines; $-7108.42 \pm 0.18$) estimated with the MrBayes stepping-stone (SS) method in \cite{zhang2019variational}. We also show MLL estimates obtained with VBPI-GNN (red dashed lines) using VIMCO estimator $(K=10)$.
  The iterations are counted as the number of likelihood evaluations (NLEs).
  Legend: 
  NoCV stands for no control variates,
  LOO for leave-one-out CV,
  and IW for the case of using importance-weighted ELBO as the learning objective.
  }
  \label{fig:learning_cond} %
\end{figure}

To achieve a stable and fast convergence in the posterior approximations, we compared several control variates (CVs) for the variance reduction of the gradients by using DS1 dataset \cite{hedges1990tetrapod}.
We demonstrate that the choice of control variates is crucial for optimization (Fig. \ref{fig:learning_cond}).
In particular, adaptive control variates (LAX) for individual Monte Carlo samples and the leave-one-out (LOO) CVs for multiple Monte Carlo samples (with $K=3$) significantly accelerate the convergence of the marginal log-likelihood (MLL) estimates,
yielding promising results comparable to VBPI-GNN.
Although IW-ELBO was effective when $Q(z)$ was comprised of diagonal Normal distributions, no advantages were observed for the case with full covariance matrices.
Similar tendencies in the MLL estimates were observed for the other dimension $(d=3, 4)$ and wrapped normal distributions (Appendix \ref{apx:sec:mll_iter_ds1}).

\subsection{Comparison of different topological distributions}
We investigated effective choices of tip coordinate distributions $Q(z)$, which yielded tree topologies,
by comparing different combinations of distribution types (normal $\mathcal{N}$ or wrapped normal $\mathcal{WN}$),
space dimensions ($d=2,3,4$), and covariance matrix types (diagonal or full), across selected CVs (LAX, LOO, and LOO+LAX).
While overall performance was stable and comparable for the range of configurations, the most flexible ones:
 the normal and wrapped normal distributions with $d=4$ and a full covariance matrix, 
 indicated relatively high MLL estimates within their respective groups (Table \ref{tab:ds1_compare_cv_models} in Appendix \ref{apx:sec:add_exp}),
implying the importance of flexibility in the variational distributions.

\subsection{Performance evaluation across eight benchmark datasets}
To demonstrate the inference performance of GeoPhy,
we compared the marginal log-likelihood (MLL) estimates for the eight real datasets (DS1-8) \cite{
  hedges1990tetrapod,
  garey1996molecular,
  yang2003comparison,
  henk2003laboulbeniopsis,
  lakner2008efficiency,
  zhang2001molecular,
  yoder2004divergence,
  rossman2001molecular}.
The gold-standard values were obtained using 
the stepping-stone (SS) algorithm \cite{xie2011improving} in MrBayes \cite{ronquist2012mrbayes},
wherein each evaluation involved ten independent runs, each with four chains of 10,000,000 iterations,
as reported in \cite{zhang2019variational}.
In Table \ref{tab:mll_values}, we summarize the MLL estimates of GeoPhy and other approximate Bayesian inference approaches for the eight datasets.
While VBPI-GNN \cite{zhang2023learnable}
employs a preselected set of tree topologies as its support set before execution,
it is known to provide reasonable MLL estimates near the reference values.
The other approaches including CSMC \cite{wang2015bayesian}, VCSMC \cite{moretti2021variational}, $\phi$-CSMC \cite{koptagel2022vaiphy} and GeoPhy (ours) tackles the more challenging general problem of model optimization by considering all candidate topologies without preselection.
We compared the GeoPhy for two configurations of $Q(z)$: a wrapped normal distribution $\mathcal{WN}$ with a 4-dimensional full covariance matrix, and a 2-dimensional diagonal matrix, with three choices of CVs for optimization.
Results for other settings, including those of the normal distributions in Euclidean spaces, are provided in Table \ref{apx:tab:mll_values_extend} in Appendix \ref{apx:sec:add_exp}.
There, we noted a slight advantage of wrapped normal distributions over their Euclidean counterparts.
In Table \ref{tab:mll_values}, we observe that the superior performance of the larger model $\mathcal{WN}$(full,4) over $\mathcal{WN}$(diag,2),
observed in Table \ref{tab:ds1_compare_cv_models},
is consistently reproduced across all datasets.
Moreover, although $\mathcal{WN}$(full,2) is relatively less performant in our comparison, almost all the estimates outperformed the other CSMC-based methods, implying the stability and efficiency of our approach.

\begin{table}[htb]
  \centering
  \caption{
    Comparison of the marginal log-likelihood (MLL) estimates with different approaches in eight benchmark datasets.
    The MLL values for MrBayes SS and VBPI-GNN, which employs the VIMCO estimator of $10$ samples and EDGE GNN, are sourced from \cite{zhang2023learnable},
    The values of CSMC, VCSMC, and $\phi$-CSMSC are referenced from \cite{koptagel2022vaiphy}.
    We adopt the nomenclature from the literature \cite{zhang2019variational} and refer to the dataset named DS7 in \cite{koptagel2022vaiphy} as DS8.
    The MLL values for our approach (GeoPhy) are shown for two different $Q(z)$ configurations: a wrapped normal distribution $\mathcal{WN}$ with $4$-dimensional full covariance matrix, and $2$-dimensional diagonal matrix.
    In each training, we used three different sets of CVs:
    LAX with $K=1$,
    LOO with $K=3$ which we labeled LOO(3), 
    and a combination of LOO and LAX, denoted as LOO(3)+.
    The bold figures are the best (highest) values obtained with GeoPhy and the tree CSMC-based methods, all of which perform an approximate Bayesian inference without the preselection of topologies.
    We underlined GeoPhy's MLL estimates that outperformed the other CSMC-based methods, demonstrating superior performance across various configurations.
  }
  \scalebox{0.71}{
  \begin{tabular}{crrrrrrrr}
  \toprule
  Dataset &
  \multicolumn{1}{c}{ DS1 }& \multicolumn{1}{c}{ DS2 }& \multicolumn{1}{c}{ DS3 }& \multicolumn{1}{c}{ DS4 }& \multicolumn{1}{c}{ DS5 }& \multicolumn{1}{c}{ DS6 }
  & \multicolumn{1}{c}{ DS7 }
  & \multicolumn{1}{c}{ DS8 }
  \\
  \#Taxa ($N$) &
  \multicolumn{1}{c}{ 27 } & \multicolumn{1}{c}{ 29 } & \multicolumn{1}{c}{ 36 } & \multicolumn{1}{c}{ 41 } & \multicolumn{1}{c}{ 50 } & \multicolumn{1}{c}{ 50 } & \multicolumn{1}{c}{ 59 } & \multicolumn{1}{c}{ 64 }
  \\
  \#Sites ($M$) &
  \multicolumn{1}{c}{ 1949 } & \multicolumn{1}{c}{ 2520 } & \multicolumn{1}{c}{ 1812 } & \multicolumn{1}{c}{ 1137 } & \multicolumn{1}{c}{ 378 } & \multicolumn{1}{c}{ 1133 } & \multicolumn{1}{c}{ 1824 } & \multicolumn{1}{c}{ 1008 }
  \\
  \midrule 
  MrBayes SS &
$-7108.42$ & $-26367.57$ & $-33735.44$ & $-13330.06$ & $-8214.51$ & $-6724.07$ & $-37332.76$ & $-8649.88$
\\
\cite{ronquist2012mrbayes,zhang2019variational} &
($0.18$) & ($0.48$) & ($0.5$) & ($0.54$) & ($0.28$) & ($0.86$) & ($2.42$) & ($1.75$)
\\
\midrule
VBPI-GNN &
$-7108.41$ & $-26367.73$ & $-33735.12$ & $-13329.94$ & $-8214.64$ & $-6724.37$ & $-37332.04$ & $-8650.65$
\\
\cite{zhang2023learnable} &
($0.14$) & ($0.07$) & ($0.09$) & ($0.19$) & ($0.38$) & ($0.4$) & ($0.26$) & ($0.45$)
\\
\midrule
CSMC &
$-8306.76$ & $-27884.37$ & $-35381.01$ & $-15019.21$ & $-8940.62$ & $-8029.51$ & $-$ & $-11013.57$
\\
\cite{wang2015bayesian,koptagel2022vaiphy} &
($166.27$) & ($226.6$) & ($218.18$) & ($100.61$) & ($46.44$) & ($83.67$) & $-$ & ($113.49$)
\\
VCSMC &
$-9180.34$ & $-28700.7$ & $-37211.2$ & $-17106.1$ & $-9449.65$ & $-9296.66$ & $-$ & $-$
\\
\cite{moretti2021variational,koptagel2022vaiphy} &
($170.27$) & ($4892.67$) & ($397.97$) & ($362.74$) & ($2578.58$) & ($2046.7$) & $-$ & $-$
\\
$\phi$-CSMC &
$-7290.36$ & $-30568.49$ & $-33798.06$ & $-13582.24$ & $-8367.51$ & $-7013.83$ & $-$ & $-9209.18$
\\
\cite{koptagel2022vaiphy} &
($7.23$) & ($31.34$) & ($6.62$) & ($35.08$) & ($8.87$) & ($16.99$) & $-$ & ($18.03$)
\\
\midrule
$\mathcal{WN}$(full,4)
&
$\underline{ \bm{ -7111.55 } }$ 
& $\underline{ -26379.48 }$ & $\underline{ -33757.79 }$ & $\underline{ -13342.71 }$ & $\underline{ -8240.87 }$ & $\underline{ -6735.14 }$ & $\underline{ -37377.86 }$ & $\underline{ -8663.51 }$
\\
&
($0.07$) & ($11.60$) & ($8.07$) & ($1.61$) & ($9.80$) & ($2.64$) & ($29.48$) & ($6.85$)
\\
LOO(3) &
$\underline{ -7119.77 }$ & $\underline{ \bm{ -26368.44 } }$ & $\underline{ -33736.01 }$ & $\underline{ -13339.26 }$ & $\underline{ -8234.06 }$ & $\underline{ \bm{ -6733.91 } }$ & $\underline{ \bm{ -37350.77 } }$ & $\underline{ -8671.32 }$
\\
&
($11.80$) & ($0.13$) & ($0.03$) & ($3.19$) & ($7.53$) & ($0.57$) & ($11.74$) & ($5.99$)
\\
LOO(3)+ &
$\underline{ -7116.09 }$ & $\underline{ -26368.54 }$ & $\underline{ \bm{ -33735.85 } }$ & $\underline{ \bm{ -13337.42 } }$ & $\underline{ \bm{ -8233.89 } }$ & $\underline{ -6735.90 }$ & $\underline{ -37358.96 }$ & $\underline{ \bm{ -8660.48 } }$
\\
&
($10.67$) & ($0.12$) & ($0.12$) & ($1.32$) & ($6.63$) & ($1.13$) & ($13.06$) & ($0.78$)
\\
$\mathcal{WN}$(diag,2) &
$\underline{ -7126.89 }$ & $\underline{ -26444.84 }$ & $-33823.74$ & $\underline{ -13358.16 }$ & $\underline{ -8251.45 }$ & $\underline{ -6745.60 }$ & $\underline{ -37516.88 }$ & $\underline{ -8719.44 }$
\\
&
($10.06$) & ($27.91$) & ($15.62$) & ($9.79$) & ($9.72$) & ($8.36$) & ($69.88$) & ($60.54$)
\\
LOO(3) &
$\underline{ -7130.67 }$ & $\underline{ -26380.41 }$ & $\underline{ -33737.75 }$ & $\underline{ -13346.94 }$ & $\underline{ -8239.36 }$ & $\underline{ -6741.63 }$ & $\underline{ -37382.28 }$ & $\underline{ -8690.41 }$
\\
&
($10.67$) & ($14.40$) & ($2.48$) & ($4.25$) & ($4.62$) & ($3.23$) & ($31.96$) & ($15.92$)
\\
LOO(3)+ &
$\underline{ -7128.40 }$ & $\underline{ -26375.28 }$ & $\underline{ -33736.91 }$ & $\underline{ -13347.32 }$ & $\underline{ -8235.41 }$ & $\underline{ -6742.40 }$ & $\underline{ -37411.28 }$ & $\underline{ -8683.22 }$
\\
&
($9.78$) & ($11.78$) & ($1.91$) & ($4.42$) & ($5.70$) & ($1.94$) & ($56.74$) & ($13.13$)
\\
\bottomrule
\end{tabular}
}
\label{tab:mll_values}
\end{table}

\begin{figure}[htb]
    \centering
    \includegraphics[trim=5pt 0pt 55pt 0pt, clip, width=0.26\linewidth]{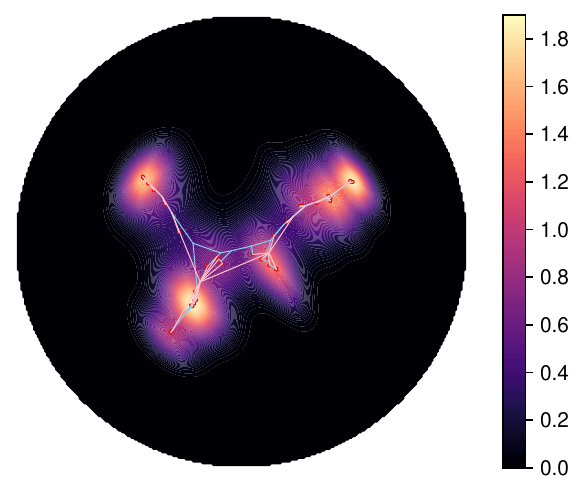}
    \includegraphics[trim=5pt 0pt 55pt 0pt, clip, width=0.26\linewidth]{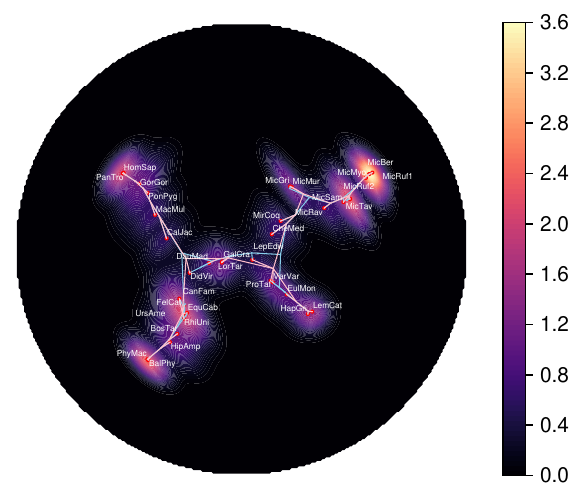}
    \includegraphics[trim=5pt 0pt 55pt 0pt, clip, width=0.27\linewidth]{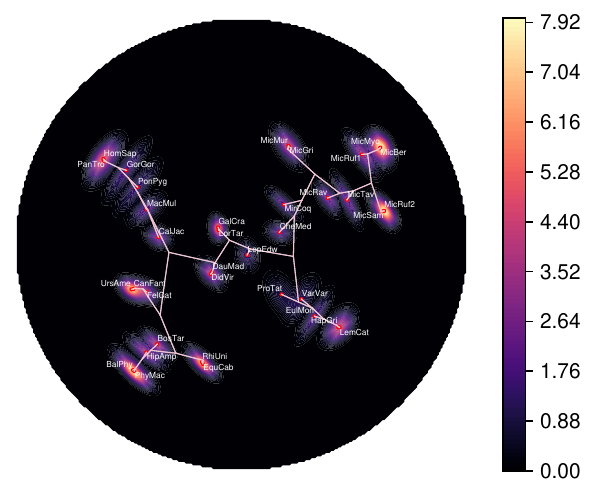}
    \caption{ 
        Superimposed probability densities of topological distributions $\sum_{i=1}^N Q(z_i)$ up to 100,000, 200,000, and 500,000 steps (MC samples) from the left.
        For $Q(z)$, we employed a wrapped normal distribution with a two-dimensional full covariance matrix. The experiments used the DS3 dataset ($N=36$).   
        The majority-rule consensus phylogenetic tree obtained with MrBayes and each step of GeoPhy are shown in blue and red lines, respectively. The center area is magnified by transforming the radius $r$ of the Poincaré coordinates into $\tanh 2.1r$.
  }
  \label{fig:trajectory} %
\end{figure}
We further analyze the accuracy and expressivity of tree topology distribution $Q(\tau)$ in comparison with MCMC tree samples.
As the training progresses, the majority consensus tree from GeoPhy progressively aligns with those obtained via MCMC (Fig. \ref{fig:trajectory}).
Furthermore, we also observed that the Robinson-Foulds (RF) distance between the consensus trees derived from the topological distribution $Q(\tau)$ and MCMC (MrBayes) are highly aligned with the MLL estimates (Fig. \ref{fig:topologies} First to Third), underscoring the validity of MLL-based evaluations. %
While GeoPhy provides a tree topology mode akin to MCMCs with a close MLL value, it may require more expressivity in $Q(z)$ to fully represent tree topology distributions. In Fig. \ref{fig:topologies} Fourth, we illustrate the room for improvement in the expressivity of $Q(\tau)$ in representing the bipartition frequency of tree samples, where VBPI-GNN aligns more closely with MrBayes.
Further analysis is found in Appendix \ref{apx:sec:topology_dists}.

\begin{figure}[t]
    \centering
    \vspace{-.5cm}
    \includegraphics[
      trim=7pt 0pt 5pt 0pt, clip, width=0.24\linewidth]{
        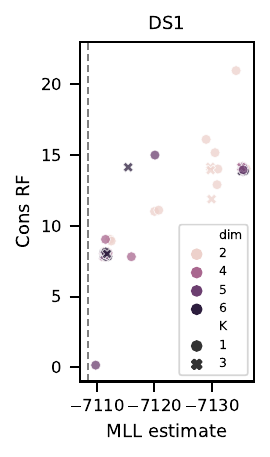}
    \rule[-8.5cm]{0cm}{0cm}
    \includegraphics[
      trim=20.5pt 0pt 5pt 0pt, clip, width=0.213\linewidth]{
        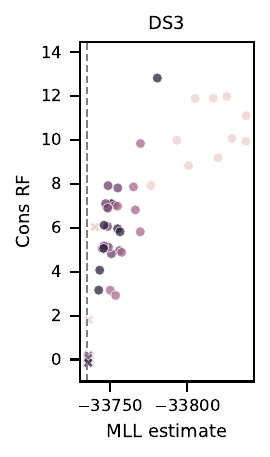}
    \rule[-8.5cm]{0cm}{0cm}
    \includegraphics[
      trim=17pt 0pt 5pt 0pt, clip, width=0.227\linewidth]{
        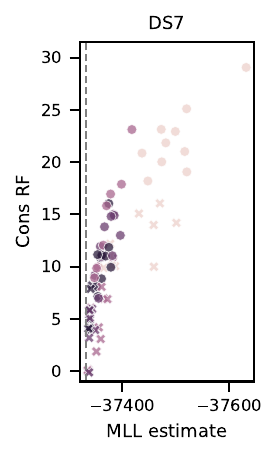}
    \rule[-8.5cm]{0cm}{0cm}
    \includegraphics[
      trim=6pt 0pt 5pt 0pt, clip, width=0.246\linewidth]{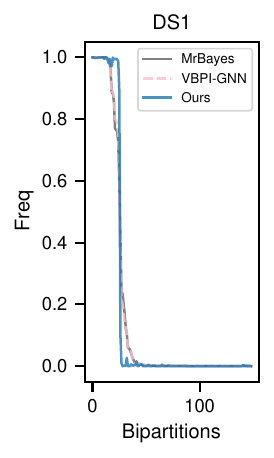}
    \rule[-8.5cm]{0cm}{0cm}
    \vspace{-8.5cm}
    \caption{
      {\bf First} to {\bf Third}: Marginal log-likelihood (MLL) estimates and Robinson-Foulds (RF) distance between the consensus trees obtained from topology distribution $Q(\tau)$ and MrBayes (MCMC) for datasets DS1, DS3, and DS7, respectively.
      {\bf Fourth}: Comparison of bipartition frequencies derived from the posterior distribution of tree topologies for MrBayes, VBPI-GNN, and GeoPhy (ours). 
    }
  \label{fig:topologies} %
\end{figure}

In terms of execution time, we compared GeoPhy with VBPI-GNN and MrBayes, obtaining results that are promising and comparable to VBPI-GNN (Fig. \ref{fig:runtimes} Left).
Though we have not included CSMC-based methods in the comparison due to deviations in MLL values, they tend to capitalize on parallelism for faster execution as demonstrated in \cite{koptagel2022vaiphy}.

\section{Conclusion}

We developed a novel differential phylogenetic inference framework named GeoPhy, which optimized a variational distribution of tree topology and branch lengths without the preselection of candidate topologies.
We also proposed a practical implementation for stable model optimization 
through choices of distribution models and control variates.
In experiments conducted with real sequence datasets,
GeoPhy consistently outperformed other approximate Bayesian methods that considered whole topologies.

\section{Limitations and Future work}
Although GeoPhy exhibits remarkable performance on standard benchmarks without preselecting topologies,
it may be necessary to use more expressive distributions for $Q(z)$ than independent parametric distributions to reach a level of performance comparable to state-of-the-art VBPI or gold-standard MCMC evaluations.
Another limitation of our study lies in its exclusive focus on efficiency, measured in terms of the number of likelihood evaluations (NLEs), without paying significant attention to optimizing the computational cost per iteration.
The general design of our variational distribution allows us to replace the tree link function such as UPGMA as presented in Fig. \ref{fig:run_conds} Right.
Future work should explore alternative design choices in continuous tree topology representations and the link functions to address these limitations. 
Extending our framework to include complex models remains crucial. Given the potential of scalable phylogenetic methods to enhance our understanding of viral and bacterial evolution, further exploration could substantially impact public health and disease control.

\begin{ack}
We appreciate the valuable comments on our study provided by Dr. Tsukasa Fukunaga of Waseda University and the anonymous reviewers.
TM was partially supported by JSPS KAKENHI JP20H04239 and JST ACT-X JPMJAX22AI.
MH was partially supported by JST CREST JPMJCR21F1 and AMED JP22ama121055, JP21ae0121049 and JP21gm0010008.
\end{ack}

\bibliographystyle{plainnat}        %
\bibliography{neurips_2023}      %

\newpage
\appendix

\section{Details of Background}
\label{apx:background}

\subsection{Summary of calculations for wrapped normal distributions}
\label{apx:hyperbolic}

The wrapped normal distribution, as proposed in \cite{nagano2019wrapped}, is a probability distribution defined on hyperbolic spaces, which is easy to sample from and evaluate its probability density at arbitrary coordinates in the hyperbolic spaces.
In the following, we provide a detailed summary of the calculation involved in applying the wrapped normal distributions within the context of the Lorentz model of hyperbolic spaces.

The Lorentz model, 
denoted as $\mathbb{H}^d$,
represents the $d$-dimenstional hyperbolic space
as a submanifold of a $d+1$ dimensional Euclidean space.
Given $u, v \in \mathbb{R}^{d+1}$,
we can define the pseudo-inner product and pseudo-norm as follows:
\begin{align}
\angles{u, v}_L 
&:= - u_0 v_0 + \sum_{j=1}^d u_j v_j,
\quad
\Verts{u}_L := \sqrt{\angles{u, u}_L}.
\end{align}
The distance between hyperbolic coordinates $\nu, \mu \in \mathbb{H}^d$ 
is defined as follows:
\begin{align}
  \mathrm{d}(\nu, \mu) := \cosh^{-1}(-\angles{\nu, \mu}_{L}),
\end{align}
where $\cosh^{-1}$ denotes the inverse function of the hyperbolic cosine function.
Consider hyperbolic coordinates $\nu, \mu \in \mathbb{H}^d$ 
and tangent vectors $u \in T_\mu \mathbb{H}^d$ and $v \in T_\nu \mathbb{H}^d$.
An exponential map $\exp_{\mu}(u) \in \mathbb{H}^d$,
a logarithm map $\log_{\mu}(\nu) \in T_{\mu} \mathbb{H}^d$,
and a parallel transport map 
$\mathrm{PT}_{\nu \to \mu}(v) \in T_\mu \mathbb{H}^d$,
can be calculated as follows:
\begin{align}
  \exp_{\mu}(u) 
  &= \cosh(\Verts{u}_L) \mu + \sinh(\Verts{u}_L) \frac{u}{\Verts{u}_L},
  \\
  \log_{\mu}(\nu)
  &= \frac{\cosh^{-1} (\alpha)}{\sqrt{\alpha^2 - 1}}
  \pars{\nu - \alpha \mu},
  \\
  \mathrm{PT}_{\nu \to \mu}(v)
  &= v + \frac{\angles{\mu - \alpha \nu, v}_L}{\alpha + 1} (\nu + \mu),
\end{align}
where we denote $\alpha = - \angles{\nu, \mu}_L$, and $\cosh^{-1}$ represents the inverse function of $\cosh$.

Given location and scale parameters denoted as $\mu \in \mathbb{H}^d$ and $\Sigma \in \mathbb{R}^{d \times d}$, respectively,
the procedure for sampling from a wrapped normal distribution $z \sim \mathcal{WN}(\mu, \Sigma)$ defined over $\mathbb{H}^d$ is given as follows:
\begin{align}
  z = \exp_{\mu} \circ \mathrm{PT}_{\mu^o \to \mu} (u),
  \quad
  u_{1:d} &\sim \mathcal{N}(0, \Sigma),
\label{apx:eq:warpped_normal_sampling}
\end{align}
where
$\mu^o = (1, 0, \dots, 0)^\top$ denotes the origin of the $\mathbb{H}^d$.
Note that we set $u_0 = 0$,
and $u := u_{0:d} \in T_{\mu^o} \mathbb{H}^d$ represents
a tangent vector at the origin $T_{\mu^o} \mathbb{H}^d$.

From the sampling definition in equation \eqref{apx:eq:warpped_normal_sampling},
the probability density function $\mathcal{WN}(z; \mu, \Sigma)$
can be derived as follows:
\begin{align}
  \log \mathcal{WN}(z; \, \mu, \Sigma) 
  &= \log \mathcal{N}\pars{
    u;\, 
    0, \Sigma
  }
  - (d-1) \ln \pars{\frac{\sinh \Verts{u}_L}{\Verts{u}_L}},
\label{apx:eq:warpped_normal_log_density}
\end{align}
where $u$ is defined as
$u = \mathrm{PT}_{\mu \to \mu^o} \circ \log_\mu (z)$.
For detailed derivation, we refer to Appendix A of \cite{nagano2019wrapped}.

\subsection{GNN-based parameterization for variational branch length distributions}
\label{apx:branch_length_params}

In this work, we employ a variational branch length distribution $Q_{\phi}(B_\tau | \tau)$ parameterized with a graph neural network (GNN)
as described in \cite{zhang2023learnable}.
In concrete, each of the branch lengths follows an independent lognormal distribution, where its location and scale parameters are predicted with a GNN that takes the tree topology $\tau$ and the learnable topological features (LTFs) of the topology $\tau$, which are computed with a method described in \cite{zhang2023learnable}.
Below, we summarize an architecture that we use in this study.

\paragraph{Branch length parameterizations}

Let $V_\tau$ and $E_\tau$ respectively represent the sets of nodes and branch edges for a given unrooted binary tree topology $\tau$.
The input to the GNN consists of node features represented by LTFs denoted as $\{ h^{(0)}_v \}_{v \in V_\tau}$.
These features undergo transformation $L$ times as follows:
\begin{align}
  \{h^{(L)}_v \}_{v \in V_\tau} 
  = \mathrm{GNN}(\{h^{(0)}_v \}_{v \in V_\tau})
  = g^{(L)} \circ \cdots \circ g^{(1)} (\{ h^{(0)} \}_{v \in V_\tau}),
  \label{apx:eq:branch_length_gnn}
\end{align}
where we set $L=2$.
The function $g^{(\ell)}$ represents a GNN layer.
For this function, we utilize edge convolutional layers, which will be described in more detail in the following paragraph.

Next, the last node features $h^{(L)}$ are transformed to output parameters of edge length as follows:
\begin{align}
  \widetilde{h}^{}_v &= \mathrm{MLP}_{V}(h_v^{(L)}),
  & (\forall v \in V_\tau)
  \\
  \widetilde{m}^{}_{(v, u)} &= \mathrm{MAX}(\widetilde{h}^{}_{v}, \widetilde{h}^{}_{u}), 
  & (\forall (v, u) \in E_\tau)
  \\
  \mu_{(v, u)}, \log \sigma_{(v, u)} &= \mathrm{MLP}_{E} (\widetilde{m}^{}_{(v, u)})
  & (\forall (v, u) \in E_\tau)
\end{align}
where 
$\mathrm{MLP}_{N}$, $\mathrm{MAX}$, and $\mathrm{MLP}_{E}$ denotes 
a multi-layer perceptron for node features with two hidden layers, 
the element-wise max operation,
and a multi-layer perceptron with a hidden layer that outputs 
the location and scale parameter $(\mu_e, \sigma_{e})$ of the lognormal distributions for each edge $e \in \E_\tau$.
For each of the hidden layers employed in $\mathrm{MLP}_N$ and $\mathrm{MLP}_E$, we set its width to $100$ and apply the ELU activation function after the linear transformation of input values.

\paragraph{Edge convolutional layers}
In a previous study \cite{zhang2023learnable}, a GNN with edge convolutional layers, referred to as EDGE, demonstrated strong performance when predicting the posterior tree distributions.
In EDGE, the function $g^{(\ell)}$
transforms node features $\{h^{(\ell)}_{v} \}_{v \in V_\tau}$
according to the following scheme:
\begin{align}
\{ h_v^{(\ell + 1)} \}_{v \in V_\tau} 
= g^{(\ell)}(\{ h_v^{(\ell)} \}_{v \in V_\tau}),
\end{align}
where $g^{(\ell)}$ is comprised of the edge convolutional operation with the exponential linear unit (ELU) activation function.
Specifically, the transformation with the layer $g^{(\ell)}$ is computed as follows:
\begin{align}
  e_{u \to v}^{(\ell)} 
  &= \mathrm{MLP}^{(\ell)} \pars{h_v^{(\ell)} \Vert h_u^{(\ell)} - h_v^{(\ell)}},
  \quad \forall u \in N_\tau(v)
  \\
  h'^{(\ell+1)}_{v}
  &= \underset{u \in N_\tau(v)}{\mathrm{AGG}^{(\ell)}} e^{(\ell)}_{u \to v},
  \\
  h^{(\ell+1)}_{v}
  &= \mathrm{ELU}\pars{h'^{(\ell + 1)}_v},
\end{align}
where $N_\tau(v)$ represents a set of neighboring nodes connected to node $v$ in the tree topology $\tau$,
$\Vert$ refers to the concatenation operation of elements,
$\mathrm{MLP}^{(\ell)}$ denotes
a full connection layer and the exponential linear unit (ELU) activation unit,
and $\mathrm{AGG}^{(\ell)}$ represents an aggregation operation that takes the maximum value of neighboring edge features $e_{u \to v}^{(\ell)}, \forall u \in N(v)$ for each element.

\section{Variational Lower Bounds and Gradient Estimators}
\label{apx:lower_bound_grad_est}

\subsection{Variational lower bound}

In Proposition \ref{prop:dpi_lbs}, we present that the following functional is a lower bound of the marginal log-likelihood $\ln P(Y)$.
\begin{align}
  \mathcal{L}[Q, R]
  &:= 
  \E_{Q(z, B_\tau)} \left[
    \ln F'(z, B_\tau)
  \right]
  =
  \E_{Q(z)} \left[
    \E_{Q(B_\tau | z)}[ \ln F(z, B_\tau) ]
    - \ln Q(z)
  \right]
  \\
  &\leq \ln P(Y),
\end{align}
where $F$ and $F'$ are respectively defined as follows:
\begin{align}
F(z, B_\tau) &:=
      \frac{P(Y, B_\tau | \tau(z))}{Q(B_\tau | \tau(z))}
      P(\tau(z))R(z | \tau(z)),
\quad
F'(z, B_\tau) := \frac{F(z, B_\tau)}{Q(z)}.
\label{apx:eq:def_F}
\end{align}

\subsection{Gradient estimators for variational lower bound}
The gradient of $\mathcal{L}[Q_{\theta, \phi}, R_\psi]$ with respect to $\theta$ is given by 
\begin{align}
  \nabla_\theta \mathcal{L}
  &=
  \nabla_\theta \E_{Q_\theta(z)} \left[
    \E_{Q_\phi(B_\tau | z)}[ \ln F(z, B_\tau) ]
    - \ln Q_\theta(z)
  \right]
  \\
  &= \E_{Q_\theta(z)}\left[
    \pars{\nabla_\theta \ln Q_\theta(z)}
    \left(
      \E_{Q_\phi(B_\tau | \tau(z))} \left[
        \ln \frac{P(Y, B_\tau | \tau(z))}{Q_\phi(B_\tau | \tau(z))}
      \right]
      + \ln P(\tau(z)) R_\psi(z | \tau(z))
    \right)
  \right]
  \nonumber
  \\
  &\quad+ \nabla_\theta \mathbb{H}[Q_\theta(z)],
\end{align}
where $\mathbb{H}$ denotes the differential entropy.
We assume that $Q_\phi(B_\tau | \tau)$ is reparameterizable as in \cite{zhang2023learnable}: namely, 
$B_\tau$ can be sampled through $B_\tau = h_\phi(\epsilon_B, \tau)$, where $\epsilon_B \sim p_B(\epsilon_B)$, 
where $p_B(\epsilon)$ and $h_\phi$ denote
a parameter-free base distribution and a differentiable function with $\phi$, respectively.
Consequently, the gradient of $\mathcal{L}$ with respect to $\phi$ is evaluated as follows:
\begin{align}
  \nabla_\phi \mathcal{L}
  &=
  \nabla_\phi \E_{Q_\theta(z)} \E_{Q_\phi(B_\tau | z)}[ \ln F(z, B_\tau) ]
  \\
  &= \E_{Q_\theta(z)} \E_{p_B(\epsilon_B)}
  \left[
      \nabla_\phi \ln \frac{P(Y, B_\tau=h_\phi(\epsilon_B, \tau) | \tau(z))}{Q_\phi(B_\tau=h_\phi(\epsilon_B, \tau) | \tau(z))}
  \right].
\end{align}
Lastly, the gradient of $\mathcal{L}$ with respect to $\psi$ can be evaluated with a tractable density model $R_\psi(z | \tau)$ as follows:
\begin{align}
  \nabla_\psi \mathcal{L}
  &=
  \nabla_\psi \E_{Q_\theta(z)} \E_{Q_\phi(B_\tau | z)}[ \ln F(z, B_\tau) ]
  = \E_{Q_\theta(z)}
  \left[
     \nabla_\psi \ln R_\psi(z | \tau(z))
  \right].
\end{align}
Given samples
$\epsilon_z \sim p_z$
and $\epsilon_B \sim p_B$,
we can compute $z = h_\theta(\epsilon_z)$,
$\tau(z)$, and $B_\tau = h_\phi(\epsilon_B, \tau(z))$.
Then,
the below equations are estimators
of gradients $\nabla_\theta \mathcal{L}$,
$\nabla_\phi \mathcal{L}$,
and $\nabla_\psi \mathcal{L}$, 
respectively:
\begin{align}
  \widehat{g}_\theta
  &= \nabla_\theta \ln Q_\theta(z) \cdot \ln F(z, B_\tau)
  - \nabla_\theta \ln Q_\theta(h_\theta(\epsilon_z)),
  \label{eq:elbo_grad_est_theta}
  \\
  \widehat{g}_\phi 
  &= 
  \nabla_\phi \ln F(z, h_\phi(\epsilon_B, \tau(z)))
  = \nabla_\phi \ln \frac{P(Y, h_\phi(\epsilon_B, \tau(z)) | \tau(z))}{Q_\phi(h_\phi(\epsilon_B, \tau(z)) | \tau(z))},
  \label{eq:elbo_grad_est_phi}
  \\
  \widehat{g}_\psi
  &= 
  \nabla_\psi \ln F(z, h_\phi(\epsilon_B, \tau(z)))
  = \nabla_\psi \ln R_\psi(z | \tau(z)).
  \label{eq:elbo_grad_est_psi}
\end{align}
The gradients can be computed through the auto-gradient of the following target:
\begin{align}
  \widehat{\mathcal{L}}'
  = \ln Q_\theta(z) \cdot
   \mathrm{detach}[f(z, B_\tau)]
  + f(z, h_\phi(\epsilon_B, \tau(z)))
  - \ln Q_\theta(h_\theta(\epsilon_z)),
\label{apx:eq:grad_target}
\end{align}
where we denote $f(z, B_\tau) = \ln F(z, B_\tau)$,
and
$\mathrm{detach}[\cdot]$ refers to an operation that blocks backpropagation through its argument.
For clarity in terms of differentiability with respect to the parameters,
we distinguish between expressions ($z$, $B_\tau$) and 
($h_\theta(\epsilon_z)$, $h_\phi(\epsilon_B, \tau(z))$).

\subsection{Multi-sample gradient estimators}

Given a $K$ set of Monte Carlo (MC) samples from $Q_{\theta, \phi}(z, B_\tau)$,
i.e. $\{ \epsilon_Z^{(k)}, z^{(k)} = h_\theta(\epsilon_Z^{(k)}) \}_{k=1}^K$
and $\{ \epsilon_B^{(k)}, B_{\tau}^{(k)} = h_\phi(\epsilon_B^{(k)}, \tau(z^{(k)})) \}_{k=1}^K$,
we can simply estimate $\nabla{L}_\theta[Q_{\theta, \phi}, R_{\psi}]$ as follows:
\begin{align}
  \widehat{g}_\theta^{(K)}
  &= 
  \frac{1}{K} \sum_{k=1}^K \pars{
  \nabla_\theta \ln Q_\theta(z^{(k)}) \cdot f(z^{(k)}, B_\tau^{(k)})
  - \nabla_\theta \ln Q_\theta(h_\theta(\epsilon_z^{(k)}))
  }.
  \label{apx:eq:elbo_grad_est_K_theta}
\end{align}
As a simple extension of equation \eqref{apx:eq:grad_target},
the gradients are obtained through an auto-gradient computation of
the following target:
\begin{align}
  \widehat{\mathcal{L}'}^{(K)}
  = 
  \frac{1}{K} \sum_{k=1}^K 
  \left(
  \ln Q_\theta(z^{(k)}) \cdot
   \mathrm{detach}[f(z^{(k)}, B_\tau^{(k)})]
  + f(z^{(k)}, h_\phi(\epsilon_B^{(k)}, \tau(z^{(k)})))
  - \ln Q_\theta(h_\theta(\epsilon_z^{(k)}))
  \right),
\label{apx:eq:K_grad_target}
\end{align}

\subsection{Leave-one-out (LOO) control variates for variance reduction}
For the term of $K$-sample gradient estimator $\widehat{g}_\theta^{(K)}$ proportional to the score function $\nabla_\theta \ln Q_\theta$,
a leave-one-out (LOO) variance reduction is known to be effective \cite{kool2019buy,richter2020vargrad}, which is denoted as follows:
\begin{align}
  \widehat{g}_{\mathrm{LOO}, \theta}^{(K)}
  &= 
  \frac{1}{K} \sum_{k=1}^K \left[
  \nabla_\theta \ln Q_\theta(z^{(k)}) \cdot 
    \pars{
      f(z^{(k)}, B_\tau^{(k)})
      - \overline{f_k}(z^{(\backslash k)}, B_\tau^{(\backslash k)})
    }
  - \nabla_\theta \ln Q_\theta(h_\theta(\epsilon_z^{(k)}))
  \right],
  \label{apx:eq:elbo_grad_est_K_theta_loo}
\end{align}
where $\overline{f_k}$ denotes:
\begin{align}
\overline{f_k}(z^{(\backslash k)}, B_\tau^{(\backslash k)}) 
&:= \frac{1}{K-1} \sum_{k'=1, k' \neq k}^K f(z^{(k')}, B_\tau^{(k')}).
\end{align}
To employ the LOO gradient estimator for $\theta$, 
the target of auto-gradient computation in equation \eqref{apx:eq:K_grad_target} needs to be adjusted as follows:
\begin{align}
  \widehat{\mathcal{L}'}_{\mathrm{LOO}}^{(K)}
  = 
  \frac{1}{K} \sum_{k=1}^K 
  &\left(
  \ln Q_\theta(z^{(k)}) \cdot
   \mathrm{detach}[f(z^{(k)}, B_\tau^{(k)})
    - \overline{f_k}(z^{(\backslash k)}, B_\tau^{(\backslash k)})
   ]
  \right.
  \nonumber
  \\
  &\quad \left.
  + f(z^{(k)}, h_\phi(\epsilon_B^{(k)}, \tau(z^{(k)})))
  - \ln Q_\theta(h_\theta(\epsilon_z^{(k)}))
  \right),
\label{apx:eq:K_grad_loo_target}
\end{align}

\subsection{LAX estimators for adaptive variance reduction}

\begin{algorithm}
\caption{GeoPhy algorithm with LAX}
\label{algo:geophy_with_lax}
\begin{algorithmic}[1]
\State $\theta, \phi, \psi, \chi \gets$ Initialize variational parameters
\While{not converged}
  \State $\epsilon_z^{(1:K)}, \epsilon_B^{(1:K)}
   \gets$ Random samples from distributions $p_z$, $p_B$
  \State $z^{(1:K)} \gets h_\theta(\epsilon_z^{(1:K)})$
  \State $B_\tau^{(1:K)} \gets h_\phi(\epsilon_B^{(1:K)}, \tau(z^{(1:K)}))$
  \State $\widehat{g}_\theta, \widehat{g}_\phi, \widehat{g}_\psi \gets$ 
  Estimate the gradients
   $\nabla_\theta \mathcal{L}$,
   $\nabla_\phi \mathcal{L}$,
   $\nabla_\psi \mathcal{L}$ 
  \State $\widehat{g}_\chi \gets \nabla_\chi \angles{\widehat{{g}_\theta^2}}$
  \State $\theta, \phi, \psi, \chi \gets $ Update parameters using an SGA algorithm, given gradients $\widehat{g}_\theta, \widehat{g}_\phi, \widehat{g}_\psi, \widehat{g}_\chi$
\EndWhile
\State \Return $\theta, \phi, \psi$
\end{algorithmic}
\end{algorithm}

The LAX estimator \cite{grathwohl2018backpropagation} is a stochastic gradient estimator based on a surrogate function, which can be adaptively learned to reduce the variance regarding the term $\nabla_\theta \ln Q_\theta(z)$.
In our case, the LAX estimator is given as follows:
\begin{align}
  \widehat{g}_{\mathrm{LAX},\theta}
  := \nabla_\theta \ln Q_\theta(z) \cdot \pars{f(z, B_\tau) - s_\chi(z)}
   + \nabla_\theta s_\chi (h_\theta(\epsilon_z)).
  \label{apx:eq:lax}
\end{align}
We summarize the modified procedure of GeoPhy with LAX in Algorithm \ref{algo:geophy_with_lax}.
As we assume $Q_\theta(z)$ is differentiable with respect to $z$,
we can also use a modified estimator as follows:
\begin{align}
  \widehat{g}_{\mathrm{LAX},\theta}
  := \nabla_\theta \ln Q_\theta(z) \cdot \pars{f(z, B_\tau) - s_\chi(z)}
   + \nabla_\theta s_\chi (h_\theta(\epsilon_z))
   - \nabla_\theta \ln Q_\theta(h_\theta(\epsilon_z)).
\end{align}

Since it is favorable to reduce the variance of $\widehat{g}_{\mathrm{LAX},\theta}$,
we optimize $\chi$ to minimize the following objective as proposed in \cite{grathwohl2018backpropagation}:
\begin{align} 
  \angles{\V_{Q_\theta(z)} [\widehat{g}_\theta]}
  := \frac{1}{n_\theta}
  \sum_{i=1}^{n_\theta}
    \V_{Q_\theta(z)} [\widehat{g}_{\theta_i}]
  = \frac{1}{n_\theta}
  \sum_{i=1}^{n_\theta} \pars{
    \E_{Q_\theta(z)} [ \widehat{g}_{\theta_i}^2 ]
     - \E_{Q_\theta(z)} [ \widehat{g}_{\theta_i} ]^2
  },
\end{align}
where $n_\theta$ denotes the dimension of $\theta$.
As the gradient in equation \eqref{apx:eq:lax} is given as an unbiased estimator
of $\nabla_\theta \mathcal{L}$,
which is not dependent on $\chi$,
we can use the relation
$\nabla_\chi \E_{Q_\theta(z)}[\widehat{g}_{\mathrm{LAX},\theta_i}] = 0$.
Therefore, the unbiased estimator of the gradient
$\nabla_\chi \angles{\V_{Q_\theta(z)}[\widehat{g}_{\mathrm{LAX},\theta}]}$ 
is given as follows:
\begin{align}
  \widehat{g}_\chi
  = \frac{1}{n_\theta}
   \sum_{i=1}^{n_\theta} \nabla_\chi \widehat{g}_{\mathrm{LAX},\theta_i}^2.
  \label{apx:eq:lax_grad_est_chi}
\end{align}
As we require the gradient of $\nabla_\theta \mathcal{L}$ with respect to $\chi$ for the optimization,
we use different objectives for auto-gradient computation with respect to $\theta$ and the other parameters $\phi$ and $\psi$ as follows:
\begin{align}
  \widehat{\mathcal{L}'}_{\mathrm{LAX},\theta}
  &= \ln Q_\theta(z) \cdot
   \pars{\mathrm{detach}[f(z, B_\tau)] - s_\chi(z)}
  + s_\chi (h_\theta(\epsilon_z))
  - \ln Q_\theta(h_\theta(\epsilon_z)),
  \\
  \widehat{\mathcal{L}'}_{\phi,\psi} 
  &= f(z, h_\phi(\epsilon_B, \tau(z))).
\end{align}

\subsection{LAX estimators with multiple MC-samples}

For the cases with $K$ MC-samples,
we use LAX estimators by differentiating the following objectives:
\begin{align}
  \widehat{\mathcal{L}'}_{\mathrm{LAX},\theta}^{(K)}
  &= \frac{1}{K} \sum_{k=1}^K \left(
    \ln Q_\theta(z^{(k)}) \cdot
    \pars{\mathrm{detach}[f(z^{(k)}, B_\tau^{(k)})] - s_\chi(z^{(k)})}
    + s_\chi (h_\theta(\epsilon_z^{(k)}))
    - \ln Q_\theta(h_\theta(\epsilon_z^{(k)}))
  \right)
  ,
  \\
  \widehat{\mathcal{L}'}_{\phi,\psi}^{(K)}
  &= \frac{1}{K} \sum_{k=1}^K
    f(z^{(k)}, h_\phi(\epsilon_B^{(k)}, \tau(z^{(k)}))).
\end{align}
When we combine LAX estimators with LOO control variates.
the target for auto-gradient computation changes to the following:
\begin{align}
  \widehat{\mathcal{L}'}_{\mathrm{LOO+LAX},\theta}^{(K)}
  = \frac{1}{K} \sum_{k=1}^K 
  &\left(
    \ln Q_\theta(z^{(k)}) \cdot
    \pars{\mathrm{detach}[
      f(z^{(k)}, B_\tau^{(k)})
      - \overline{f_k}(z^{(\backslash k)}, B_\tau^{(\backslash k)})
    ] - s_\chi(z^{(k)})}
  \right.
  \nonumber
    \\
  &\left.
    \quad
    + s_\chi (h_\theta(\epsilon_z^{(k)}))
    - \ln Q_\theta(h_\theta(\epsilon_z^{(k)}))
  \right).
\end{align}
We note that
$\widehat{\mathcal{L}'}_{\phi,\psi}^{(K)}$
is not affected by the introduction of LOO control variates.

\subsection{Derivation of importance-weighted evidence lower bound (IW-ELBO)}
An importance-weighted evidence lower bound (IW-ELBO) \cite{BurdaGS15}, is a tighter lower bound of the log-likelihood $\ln P(Y)$ than ELBO.
For our model, a conventional $K$-sample IW-ELBO is given as follows:
\begin{align}
  \mathcal{L}_{\mathrm{IW}}^{(K)}[Q]
  &:= 
  \E_{Q(z^{(1)}, B_\tau^{(1)}) \cdots Q(z^{(K)}, B_\tau^{(K)})} \left[
    \ln \frac{1}{K} \sum_{k=1}^K 
    \frac{P(Y, B_\tau^{(k)}, \tau(z^{(k)}))}{Q(B_\tau^{(k)}, \tau(z^{(k)})) }
  \right].
\label{apx:eq:iw_elbo1_def}
\end{align}
The fact that $\mathcal{L}_{\mathrm{IW}}^{(K)}[Q]$ is the lower bound of $\ln P(Y)$ is directly followed from Theorem 1 in \cite{BurdaGS15}. 
However, as our model cannot directly evaluate the mass function $Q(\tau)$,
we must resort to considering the second lower bound, similar to the case of $K=1$ as depicted in Proposition \ref{prop:dpi_lbs}.
We define the $K$-sample tractable IW-ELBO as follows:
\begin{align}
  \mathcal{L}_{\mathrm{IW}}^{(K)}[Q, R]
  &:= 
  \E_{Q(z^{(1)}, B_\tau^{(1)}) \cdots Q(z^{(K)}, B_\tau^{(K)})} \left[
    \ln \frac{1}{K} \sum_{k=1}^K F'(z^{(k)}, B_\tau^{(k)})
  \right],
\label{apx:eq:iw_elbo2_def}
\end{align}
where $F'$ is defined in equation \eqref{apx:eq:def_F}.
We will prove in Theorem \ref{apx:theorem:dpi_iw_elbos} that
$\mathcal{L}^{(K)}_{\mathrm{IW}}[Q, R]$ serves as a lower bound of the $\ln P(Y)$.
Although this inequality holds when $K=1$, as shown by  
$\ln P(Y) \geq \mathcal{L}[Q] \geq \mathcal{L}[Q, R]$
in Proposition \ref{prop:dpi_lbs},
the relationship is less obvious when $K>1$.
Before delving into that, we prepare the following proposition.
\begin{prop}
\label{apx:prop:r_ineq}
Given $Q(z, \tau)$ as defined in equation \ref{eq:geo_topology}
and an arbitrary conditional distribution $R(z | \tau)$,
it follows that
\begin{align}
  \E_{R(z | \tau)}[\mathbb{I}[\tau = \tau(z)]] \leq 1,
\end{align}
where setting $R(z | \tau) = Q(z | \tau)$ is a sufficient condition for the equality to hold.
\end{prop}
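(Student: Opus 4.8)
The plan is to unpack the expectation $\E_{R(z \mid \tau)}[\mathbb{I}[\tau = \tau(z)]]$ and recognize it as the probability mass that $R(z \mid \tau)$ assigns to the preimage set $\tau^{-1}(\tau) := \{ z \in \mathcal{Z} : \tau(z) = \tau \}$. Since $\tau$ is fixed on the left-hand side while $z$ ranges over all of $\mathcal{Z}$ under $R(z \mid \tau)$, the indicator equals $1$ exactly when the sampled coordinates $z$ map to that particular topology $\tau$. Concretely, I would write
\begin{align}
  \E_{R(z \mid \tau)}[\mathbb{I}[\tau = \tau(z)]]
  = \int_{\mathcal{Z}} \mathbb{I}[\tau = \tau(z)] \, R(z \mid \tau) \, dz
  = \int_{\tau^{-1}(\tau)} R(z \mid \tau) \, dz.
\end{align}

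The key observation is that this last integral is the measure of a subset of $\mathcal{Z}$ under the probability density $R(z \mid \tau)$, and since $R(z \mid \tau)$ is a normalized probability distribution over the whole space $\mathcal{Z}$, the measure of any subset is bounded above by $1$. This immediately yields the claimed inequality $\E_{R(z \mid \tau)}[\mathbb{I}[\tau = \tau(z)]] \leq 1$. The argument requires no regularity beyond $R(z \mid \tau)$ being a genuine probability distribution, which is assumed in the setup.

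For the equality condition, I would substitute $R(z \mid \tau) = Q(z \mid \tau)$ and appeal to the structural identity established in the proof of Proposition~\ref{prop:dpi_lbs}, namely $Q(\tau) Q(z \mid \tau) = Q(z) \mathbb{I}[\tau = \tau(z)]$. This relation shows that $Q(z \mid \tau)$ is supported entirely on the preimage $\tau^{-1}(\tau)$, so $\mathbb{I}[\tau = \tau(z)] = 1$ holds $Q(z \mid \tau)$-almost everywhere. Integrating the normalized conditional density over its own support then gives exactly $1$:
\begin{align}
  \E_{Q(z \mid \tau)}[\mathbb{I}[\tau = \tau(z)]]
  = \int_{\tau^{-1}(\tau)} Q(z \mid \tau) \, dz
  = \int_{\mathcal{Z}} Q(z \mid \tau) \, dz = 1.
\end{align}

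I do not anticipate a genuine obstacle here, as the statement is essentially the observation that a probability measure of a set is at most one, together with the fact that the posterior-induced conditional $Q(z \mid \tau)$ is concentrated on the correct fiber. The only point demanding care is making the fiber/support argument precise: one must note that $Q(z \mid \tau)$ places no mass outside $\tau^{-1}(\tau)$, which is immediate from the factorization $Q(\tau) Q(z \mid \tau) = Q(z) \mathbb{I}[\tau = \tau(z)]$ carried over from the earlier proposition. This proposition is presumably a lemma toward proving that $\mathcal{L}^{(K)}_{\mathrm{IW}}[Q, R]$ lower-bounds $\ln P(Y)$ in Theorem~\ref{apx:theorem:dpi_iw_elbos}, where the inequality $\E_{R(z \mid \tau)}[\mathbb{I}[\tau = \tau(z)]] \leq 1$ will control the importance weights.
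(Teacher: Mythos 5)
Your proof is correct and takes essentially the same route as the paper's: the inequality is the trivial bound $\mathbb{I}[\tau = \tau(z)] \leq 1$ under any probability measure, and the equality condition rests on the factorization $Q(\tau) Q(z \mid \tau) = Q(z)\, \mathbb{I}[\tau = \tau(z)]$. The only cosmetic difference is that the paper concludes algebraically via $\E_{Q(z \mid \tau)}[\mathbb{I}[\tau = \tau(z)]] = \E_{Q(z)}[\mathbb{I}[\tau = \tau(z)]^2]/Q(\tau) = Q(\tau)/Q(\tau) = 1$, whereas you argue that $Q(z \mid \tau)$ is supported on the fiber $\tau^{-1}(\tau)$ so the indicator equals one almost everywhere---the same observation in different words.
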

\begin{proof}
The inequality immediately follows from the definition as follows:
\begin{align}
  \E_{R(z | \tau)}[\mathbb{I}[\tau = \tau(z)]]
  \leq \E_{R(z | \tau)}[1] = 1.
\end{align}
Next, when we set $R(z | \tau) = Q(z | \tau)$,
the condition for equality is satisfied as follows:
\begin{align}
  \E_{Q(z | \tau)}[\mathbb{I}[\tau = \tau(z)]]
  = \frac{\E_{Q(z)}[ \mathbb{I}[\tau = \tau(z)]^2 ] }{Q(\tau)}
  = \frac{ Q(\tau) }{Q(\tau)}
  = 1,
\end{align}
where we have used the definition of 
$Q(\tau) := \E_{Q(z)}[\mathbb{I}[\tau = \tau(z)]]$ 
from equation \eqref{eq:geo_topology}
and the resulting relation $Q(z | \tau) Q(\tau) = \mathbb{I}[\tau = \tau(z)] Q(z)$.
\end{proof}

\begin{theorem}
\label{apx:theorem:dpi_iw_elbos}
Given $Q(z, \tau)$ as defined in equation \ref{eq:geo_topology}
and an arbitrary conditional distribution $R(z | \tau)$
that satisfies $\mathrm{supp} R(z | \tau) \supseteq \mathrm{supp} Q(z | \tau)$,
for any natural number $K > 1$,
the following relation holds:
\begin{align}
\ln P(Y) 
\geq \mathcal{L}_{\mathrm{IW}}^{(K)}[Q, R]
\geq \mathcal{L}_{\mathrm{IW}}^{(K-1)}[Q, R].
\end{align}
Additionally, if $F'(z, B_\tau)$ is bounded
and $\forall \tau, \E_{R(z | \tau)}[\mathbb{I}[\tau = \tau(z)]] = 1$, 
then $\mathcal{L}_{\mathrm{IW}}^{(K)}[Q, R]$ approaches
$\ln P(Y)$ as $K \to \infty$.
\end{theorem}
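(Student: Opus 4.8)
The plan is to treat the three claims separately, following the IWAE argument of \cite{BurdaGS15} but adapting each step to the deterministic link $\tau(z)$ and the importance correction $R(z\mid\tau)$. Throughout I abbreviate $w_k := F'(z^{(k)}, B_\tau^{(k)})$, so that $\mathcal{L}_{\mathrm{IW}}^{(K)}[Q,R] = \E[\ln \frac1K \sum_{k=1}^K w_k]$ with the expectation taken over $K$ i.i.d. draws from $Q(z, B_\tau)$. For the upper bound $\ln P(Y) \ge \mathcal{L}_{\mathrm{IW}}^{(K)}[Q,R]$, I would first apply Jensen's inequality to the concave logarithm, obtaining $\mathcal{L}_{\mathrm{IW}}^{(K)} \le \ln \E_{Q(z,B_\tau)}[F']$, and then evaluate this single-sample expectation. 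Substituting $F' = F/Q(z)$ and the definition of $F$ cancels the factors $Q(z)$ and $Q(B_\tau\mid\tau(z))$; integrating out $B_\tau$ leaves $\int P(Y,\tau(z)) R(z\mid\tau(z))\, dz$. Rewriting the integrand as $\sum_\tau \mathbb{I}[\tau=\tau(z)] P(Y,\tau) R(z\mid\tau)$ and exchanging sum and integral yields $\sum_\tau P(Y,\tau)\, \E_{R(z\mid\tau)}[\mathbb{I}[\tau=\tau(z)]]$, which Proposition \ref{apx:prop:r_ineq} bounds above by $\sum_\tau P(Y,\tau) = P(Y)$.

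For the monotonicity $\mathcal{L}_{\mathrm{IW}}^{(K)} \ge \mathcal{L}_{\mathrm{IW}}^{(K-1)}$, I would use the leave-one-out symmetrization. Setting $a_j := \frac{1}{K-1}\sum_{k\neq j} w_k$, a counting argument shows $\frac1K\sum_{j=1}^K a_j = \frac1K\sum_{k=1}^K w_k$, since each $w_k$ appears in exactly $K-1$ of the $K$ leave-one-out means. Applying Jensen inside the expectation then gives $\ln\frac1K\sum_k w_k = \ln\frac1K\sum_j a_j \ge \frac1K\sum_j \ln a_j$; taking expectations and using that each $a_j$ is an average of $K-1$ i.i.d. copies of $w$ (so $\E[\ln a_j] = \mathcal{L}_{\mathrm{IW}}^{(K-1)}$) delivers the claim.

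For the limit, the hypothesis $\E_{R(z\mid\tau)}[\mathbb{I}[\tau=\tau(z)]] = 1$ upgrades the computation above to the exact identity $\E_{Q(z,B_\tau)}[F'] = P(Y)$. The strong law of large numbers then gives $\frac1K\sum_k w_k \to P(Y)$ almost surely, hence $\ln\frac1K\sum_k w_k \to \ln P(Y)$ almost surely. Boundedness of $F'$ keeps the integrand uniformly bounded above, so together with the monotone increase established in the previous paragraph and the bounded convergence theorem I would pass the limit inside the expectation, concluding $\mathcal{L}_{\mathrm{IW}}^{(K)} \to \ln P(Y)$.

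I expect the main obstacle to be the single-sample expectation $\E_{Q(z,B_\tau)}[F']$ in the upper bound, since this is where the deterministic, measure-zero character of the map $\tau(z)$ must be handled correctly through the indicator rewriting, and where the non-trivial inequality of Proposition \ref{apx:prop:r_ineq} is essential for turning the cancellation into a bound by $P(Y)$. The monotonicity, by contrast, is a purely formal consequence of the leave-one-out identity together with Jensen, and the limit follows once the expectation identity and boundedness are in place.
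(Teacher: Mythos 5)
Your proposal is correct and takes essentially the same route as the paper's proof: Jensen's inequality followed by the indicator rewriting of $\E_{Q(z,B_\tau)}[F']$ and Proposition \ref{apx:prop:r_ineq} for the bound $\ln P(Y) \geq \mathcal{L}_{\mathrm{IW}}^{(K)}[Q,R]$, and the strong law of large numbers with boundedness of $F'$ for the limit. Your leave-one-out symmetrization for monotonicity is precisely the paper's expectation over uniform random subsets $U^K_M$ specialized to $M = K-1$ (the uniform distribution over $(K-1)$-subsets is exactly the leave-one-out distribution), so the two arguments coincide there as well.
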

\begin{proof}
We first show that
for any natural number $K > M$,
\begin{align}
\mathcal{L}_{\mathrm{IW}}^{(K)}[Q, R]
\geq \mathcal{L}_{\mathrm{IW}}^{(M)}[Q, R].
\label{apx:eq:iwelbo_KM_ineq}
\end{align}
For simplicity, we denote 
$Q(z^{(k)}, B_\tau^{(k)})$ and $F'(z^{(k)}, B_\tau^{(k)})$ 
as $Q_k$ and $F'_k$, respectively, in the following discussion.
Let 
$U^K_{M}$ represent a uniform distribution over a subset with $M$ distinct indices chosen from the $K$ indices $\{1, \dots, K\}$.
Similar to the approach used in \cite{BurdaGS15},
we will utilize the following relationship:
\begin{align}
  \frac{1}{K} \sum_{k=1}^K F'_k = 
  \E_{\{i_1, \dots, i_M\} \sim U^K_M} \left[
  \frac{1}{M} \sum_{m=1}^M F'_{i_m}
  \right].
\end{align}
Now, the inequality \eqref{apx:eq:iwelbo_KM_ineq} is derived as follows:
\begin{align}
\E_{Q_1 \cdots Q_K} \left[
  \ln \pars{\frac{1}{K} \sum_{k=1}^K F'_k}
\right]
&=
\E_{Q_1 \cdots Q_K} \left[
  \ln 
  \E_{\{i_1, \dots, i_m\} \sim U^K_M} \left[
    \pars{ \frac{1}{M} \sum_{m=1}^M F'_{i_m} }
  \right]
\right]
\\
&\geq
\E_{Q_1 \cdots Q_K} \left[
  \E_{\{i_1, \dots, i_m\} \sim U^K_M} \left[
  \ln 
  \pars{ \frac{1}{M} \sum_{m=1}^M F'_{i_m} }
  \right]
\right]
\\
&=
\E_{Q_1 \cdots Q_M} \left[
  \ln 
  \pars{ \frac{1}{M} \sum_{m=1}^M F'_{m} }
\right],
\end{align}
where we have also used Jensen's inequality.

Next, we show that $\ln P(Y) \geq \mathcal{L}_{\mathrm{IW}}^{(K)}[Q, R]$.
We again use Jensen's inequality as follows:
\begin{align}
\mathcal{L}_{\mathrm{IW}}^{(K)}[Q, R] 
&= 
\E_{Q_1 \cdots Q_K} \left[
  \ln \frac{1}{K} \sum_{k=1}^K F'_k
\right]
\\
&\leq 
\ln \E_{Q_1 \cdots Q_K} \left[
  \frac{1}{K} \sum_{k=1}^K F'_k
\right]
=
\ln \E_{Q(z, B_\tau)} \left[
  F'(z, B_\tau)
\right].
\end{align}
The last term is further transformed as follows:
\begin{align}
\ln \E_{Q(z, B_\tau)} \left[
  F'(z, B_\tau)
\right]
&=
\ln \E_{Q(z, B_\tau)} \left[
  \frac{P(Y, B_\tau | \tau(z)) R(z | \tau(z))}{Q(B_\tau | \tau(z)) Q(z)}
\right]
\label{apx:eq:ln_mean_F_trans_begin}
\\
&=
\ln \E_{Q(z, B_\tau)} \left[
  \frac{P(Y, B_\tau | \tau(z)) R(z | \tau(z))}{Q(z, B_\tau)}
\right]
\\
&=
\ln \E_{Q(z)} \left[
  \frac{P(Y, \tau(z)) R(z | \tau(z))}{Q(z)}
\right]
\\
&=
\ln
  \sum_{\tau' \in \mathcal{T}}
  \E_{Q(z)}  \left[
  \frac{P(Y, \tau') R(z | \tau')}{Q(z)}
  \mathbb{I}[\tau' = \tau(z)]
\right]
\\
&=
\ln
  \sum_{\tau' \in \mathcal{T}}
  P(Y, \tau') \E_{R(z | \tau')}  \left[ \mathbb{I}[\tau' = \tau(z)]
\right]
\label{apx:eq:ln_mean_F_trans_end}
\\
&\leq
\ln
  \sum_{\tau' \in \mathcal{T}}
  P(Y, \tau')
=
  \ln P(Y),
\end{align}
where,
in the transition from the first to the second row,
we employed the following relation:
\begin{align}
Q(B_\tau | \tau(z))
= \sum_{\tau \in \mathcal{T}} Q(B_\tau | \tau) \mathbb{I}[\tau = \tau(z)]
= \sum_{\tau \in \mathcal{T}} Q(B_\tau | \tau) Q(\tau | z)
= Q(B_\tau | z),
\end{align}
and we have used Proposition \ref{apx:prop:r_ineq} for the last inequality.

Finally, we will show that the following convergence property
assuming that $F(z, B_\tau)$ is bounded:
\begin{align}
  &\mathcal{L}^{(K)}_{\mathrm{IW}}[Q, R] \to 
  \ln \pars{
    \sum_{\tau' \in \mathcal{T}} P(Y, \tau') \E_{R(z | \tau')}  \mathbb{I}[\tau' = \tau(z)]
  }
  &
  (K \to \infty).
\label{apx:eq:iw_elbo_converge}
\end{align}
From the strong law of large numbers, it follows that
$\frac{1}{K} \sum_{k=1}^K F'_k$ converges to the following term almost surely as $K \to \infty$:
\begin{align}
  \mathbb{E}_{Q(z_k, B_k)}\left[ F'(z_k, B_k) \right]
  =
  \sum_{\tau' \in \mathcal{T}} P(Y, \tau') \E_{R(z | \tau')}  \mathbb{I}[\tau' = \tau(z)],
\end{align}
where we have employed the same transformations
as used from equation \eqref{apx:eq:ln_mean_F_trans_begin}
to \eqref{apx:eq:ln_mean_F_trans_end}.
Observe that the {\it r.h.s} term of the equation \eqref{apx:eq:iw_elbo_converge} equals to $\ln P(Y)$
when $\forall \tau' \in \mathcal{T}, \E_{R(z | \tau')}[\mathbb{I}[\tau' = \tau(z)]] = 1$,
which completes the proof.
\end{proof}

\paragraph{Estimation of marginal log-likelihood}
For the estimation of $\ln P(Y)$, we employ 
$\mathcal{L}^{(K)}[Q, R]$ with $K=1,000$
similar to \cite{zhang2023learnable}.
From Theorem \ref{apx:theorem:dpi_iw_elbos},
IW-ELBO $\mathcal{L}^{(K)}[Q, R]$ is at least a better lower bound of $\ln P(Y)$ than ELBO $\mathcal{L}[Q, R]$, and converges to $\ln P(Y)$ when 
$\forall \tau, \E_{R(z | \tau)}[\mathbb{I}[\tau = \tau(z)]] = 1$.
According to Proposition \ref{apx:prop:r_ineq}, this equality condition is satisfied when we set $R(z | \tau) = Q(z | \tau)$,
which is approached by maximizing $\mathcal{L}[Q, R]$ with respect to $R$ as indicated in Proposition \ref{prop:dpi_lbs}.

\subsection{Gradient estimators for IW-ELBO}
The gradient of IW-ELBO $\mathcal{L}_{\mathrm{IW}}^{(K)}[Q_{\theta, \phi}, R_\psi]$ with respect to $\theta$ is given by 
\begin{align}
  \nabla_\theta \mathcal{L}_{\mathrm{IW}}^{(K)}
  &=
  \E_{Q_{\theta,\phi}(z^{(1)}, B_\tau^{(1)}) \cdots Q_{\theta,\phi}(z^{(K)}, B_\tau^{(K)})} \left[
    \sum_{k=1}^K
      w_k(z^{(1:K)}, B^{(1:K)}) \nabla_\theta \ln F'(z^{(k)}, B_\tau^{(k)})
  \right]
  \nonumber
  \\
  &+ 
  \E_{Q_{\theta,\phi}(z^{(1)}, B_\tau^{(1)}) \cdots Q_{\theta,\phi}(z^{(K)}, B_\tau^{(K)})} \left[
    \sum_{k=1}^K
      \nabla_\theta \ln Q_\theta(z^{(k)})\cdot \ell(z^{(1:K)}, B^{(1:K)})
  \right],
\end{align}
where we have defined
\begin{align}
  w_k(z^{(1:K)}, B_\tau^{(1:K)})
  &:= \frac{F'(z^{(k)}, B_\tau^{(k)})}
    {\sum_{k'=1}^K F'(z^{(k')}, B_\tau^{(k')})},
  \\
  \ell(z^{(1:K)}, B_\tau^{(1:K)}) 
  &:= \ln \pars{
    \frac{1}{K} \sum_{k'=1}^K F'(z^{(k')}, B_\tau^{(k')})
  }.
\end{align}
Similarly,
as $F'$ is differentiable with respect to $B_\tau$,
and $B_\tau = h_\phi(\epsilon_B, \tau)$ is differentiable with respect to $\phi$,
the gradient of $\mathcal{L}_{\mathrm{IW}}^{(K)}$ with respect to $\phi$ can be evaluated as follows: 
\begin{align}
  \nabla_\phi \mathcal{L}_{\mathrm{IW}}^{(K)}
  &= 
  \E_{Q_{\theta}(z^{(1)}) \cdots Q_{\theta}(z^{(K)})}
  \E_{p_{B}(\epsilon_B^{(1)}) \cdots p_{B}(\epsilon_B^{(K)})}
   \left[
    \sum_{k=1}^K
      w_k(z^{(1:K)}, B_\tau^{(1:K)}) 
      \nabla_\phi \ln F'(z^{(k)}, h_\phi(\epsilon^{(k)}, \tau))
  \right].
\end{align}

Since $\nabla_\theta \ln F'(z^{(k)}, B_\tau^{(k)}) = - \nabla_\theta \ln Q_\theta(z^{(k)})$ from equation \eqref{apx:eq:def_F},
an unbiased estimator of the gradient $\nabla_\theta \mathcal{L}^{(K)}$ is given as follows:
\begin{align}
  \widehat{g}_{\mathrm{IW},\theta}^{(K)}
  := 
  \sum_{k=1}^K
  \nabla_\theta \ln Q_\theta(z^{(k)}) \cdot \left[
    - w_k(z^{(1:K)}, B_\tau^{(1:K)}) + \ell(z^{(1:K)}, B_\tau^{(1:K)})
  \right].
\label{apx:eq:iw_grad_est_theta}
\end{align}
The remaining gradient estimators are given as follows:
\begin{align}
  \widehat{g}_{\mathrm{IW},\phi}^{(K)}
  &= 
    \sum_{k=1}^K
      w_k(z^{(1:K)}, B_\tau^{(1:K)}) 
      \nabla_\phi \ln F(z^{(k)}, h_\phi(\epsilon^{(k)}, \tau)),
  \\
  \widehat{g}_{\mathrm{IW},\phi}^{(K)}
  &= 
    \sum_{k=1}^K
      w_k(z^{(1:K)}, B_\tau^{(1:K)}) 
      \nabla_\psi \ln F(z^{(k)}, h_\phi(\epsilon^{(k)}, \tau)).
\end{align}

In total, the target for computing auto-gradient for these gradients is given as follows:
\begin{align}
  \widehat{\mathcal{L}}_{\mathrm{IW}}'^{(K)}
  &= 
  \sum_{k=1}^K
  \ln Q_\theta(z^{(k)}) \cdot \mathrm{detach}\left[
    - w_k(z^{(1:K)}, B_\tau^{(1:K)}) + \ell(z^{(1:K)}, B_\tau^{(1:K)})
  \right]
  \nonumber
  \\
  &
  + 
  \sum_{k=1}^K
  \mathrm{detach}[w_k(z^{(1:K)}, B_\tau^{(1:K)})] \ln F(z^{(k)}, h_\phi(\epsilon_B^{(k)}, \tau(z^{(k)}))).
\label{apx:eq:iw_elbo_target}
\end{align}

\subsection{VIMCO estimators}
The VIMCO estimator \cite{mnih2016variational}
aims at reducing the variance in each of the terms proportional to $\ell(z^{(1:K)}, B^{(1:K)})$ in equation \eqref{apx:eq:iw_grad_est_theta}.
This is accomprished by forming for a control variate $\ell_k(z^{(\backslash k)}, B^{(\backslash B_\tau)})$ that consists of only variables other than $z^{(k)}$ and $B^{(k)}$ for every $k$ as follows:
\begin{align}
  \widehat{g}_{\theta,\mathrm{VIMCO}}^{(K)}
  := 
  \sum_{k=1}^K
  \nabla_\theta \ln Q_\theta(z^{(k)}) \cdot \left[
    - w_k(z^{(1:K)}, B_\tau^{(1:K)}) + \ell(z^{(1:K)}, B_\tau^{(1:K)}) - \overline{\ell}_k(z^{(\backslash k)}, B_\tau^{(\backslash k)})
  \right],
\label{apx:eq:vimco_grad_est_theta}
\end{align}
where we denote that
\begin{align}
\overline{\ell}_k(z^{(\backslash k)}, B^{(\backslash k)})
&:= \ln \pars{
  \frac{1}{K} \pars{
    \overline{F'}_{k}(z^{(\backslash k)}, B_\tau^{(\backslash k)}) 
    +
    \sum_{k'=1, k'\neq k}^K F'(z^{(k')}, B_\tau^{(k')})
  }
},
\\
\overline{F'}_{k}(z^{(\backslash k)}, B_\tau^{(\backslash k)}) 
&:= 
\exp \pars{
  \frac{1}{K-1} \sum_{k'=1, k'\neq k}^K \ln F'(z^{(k')}, B_\tau^{(k')})
}
.
\end{align}
Note that the unbiasedness of $\widehat{g}_{\theta,\mathrm{VIMCO}}$ can be confirmed through the following observation:
\begin{align}
  &\E_{Q_{\theta,\phi}(z^{(k)}, B_\tau^{(k)})}
  \left[
    \nabla_\theta \ln Q_\theta(z^{(k)}) \cdot
     \overline{\ell}_k(z^{(\backslash k)}, B_\tau^{(\backslash k)})
    \right]
  \nonumber
  \\
  &= \overline{\ell}_k(z^{(\backslash k)}, B_\tau^{(\backslash k)})
    \cdot
    \E_{Q_{\theta,\phi}(z^{(k)}, B_\tau^{(k)})}
    \left[
      \nabla_\theta \ln Q_\theta(z^{(k)})
    \right]
  \nonumber
  \\
  &= \overline{\ell}_k(z^{(\backslash k)}, B_\tau^{(\backslash k)})
    \cdot
    \E_{Q_{\theta,\phi}(z^{(k)})}
    \left[
      \nabla_\theta \ln Q_\theta(z^{(k)})
    \right]
  = 0.
\end{align}
To employ the VIMCO estimator for $\theta$,
the target for auto-gradient computation as stated in equation \eqref{apx:eq:iw_elbo_target} needs to be adjusted as follows:
\begin{align}
  \widehat{\mathcal{L}}'^{(K)}_{\mathrm{VIMCO}}
  &= 
  \sum_{k=1}^K
  \ln Q_\theta(z^{(k)}) \cdot \mathrm{detach}\left[
    - w_k(z^{(1:K)}, B_\tau^{(1:K)}) + \ell(z^{(1:K)}, B_\tau^{(1:K)})
    - \overline{\ell}_k(z^{(\backslash k)}, B_\tau^{(\backslash k)})
  \right]
  \nonumber
  \\
  &
  + 
  \sum_{k=1}^K
  \mathrm{detach}[w_k(z^{(1:K)}, B_\tau^{(1:K)})] \ln F(z^{(k)}, h_\phi(\epsilon_B^{(k)}), \tau(z^{(k)})).
\end{align}

\section{Experimental Details}
\label{apx:exp_details}

\subsection{Training process}

For the training of GeoPhy, we continued the stochastic gradient descent process until a total of 1,000,000 Monte Carlo (MC) tree samples were consumed. Specifically, if $K$ MC-samples were used per step, we performed up to 1,000,000 / K steps.
It is noteworthy that the number of MC samples equaled the number of likelihood evaluations (NLEs), which provided us with a basis for comparing convergence speed between different runs, as shown in Fig. \ref{fig:learning_cond}.
In all of our experiments,
we used Adam optimizer with an initial learning rate of $0.0001$.
The learning rate was then multiplied by $0.75$ after every 200,000 steps.
Similar to approaches taken by \citet{zhang2019variational},
we incorporated an annealing procedure during the initial consumption of 100,000 MC samples. Specifically, we replaced the likelihood function in the lower bound with $P(Y | B_\tau, \tau)^\beta$ and linearly increased the inverse temperature $\beta$ from $0.001$ to $1$ throughout the iterations.
Note that all the estimations of marginal log-likelihood (MLL) were performed with $\beta$ set to $1$.

For each estimation of MLL values, GeoPhy and VBPI-GNN used IW-ELBO with 1000 samples, whereas all CSMC-based methods used 2048 particle weights \cite{koptagel2022vaiphy}. %

\subsection{Variational branch length distribuions}
For the variational branch length distribution $Q_{\phi}(B_\tau | \tau)$, we followed an architecture of \cite{zhang2023learnable};
namely, each branch length independently followed a lognormal distribution which was parameterized with a graph neural network (GNN). 
Details are described in Appendix \ref{apx:branch_length_params}.

\subsection{LAX estimators}

As input features of a surrogate function $s_\chi(z)$ used in the LAX estimators,
we employed a flattened vector of 
coordinates $z \in \mathbb{R}^{N \times d}$ when $z$ resides in Euclidean space.
In cases where the coordinates were $z \in \mathbb{H}^d$,
we first transformed $z$ with a logarithm map $\log_{\mu^o} z \in T_{\mu^o} \mathbb{H}^d$,
then omitted their constant value $0$-th elements and subsequently flattened the result.
We implemented a simple multi-layer perceptron (MLP) network with a single hidden layer of width $10Nd$ and a subsequent sigmoid linear unit (SiLU) activation function as the neural network to output $s_\chi(z)$.

\subsection{Replication of MLL estimates with MrBayes SS}

Given the observed discrepancies in marginal log-likelihood (MLL) estimates obtained with the MrBayes stepping-stone (SS) method between references \cite{zhang2019variational} and \cite{koptagel2022vaiphy}, we replicated the MrBayes SS runs using MrBayes version 3.2.7a.
The script we used is provided below.
\begin{verbatim}
BEGIN MRBAYES;
set autoclose=yes nowarn=yes Seed=123 Swapseed=123;
lset nst=1;
prset statefreqpr=fixed(equal);
prset brlenspr=Unconstrained:exp(10.0);
ss ngen=10000000 nruns=10 nchains=4 printfreq=1000 samplefreq=100 \
savebrlens=yes filename=mrbayes_ss_out;
END;
\end{verbatim}
We incorporated the results in the row named Replication in Table \ref{apx:tab:mll_values_extend},
where the values aligned more closely with those found in \cite{zhang2019variational}.
We deduced that the prior distribution used in \cite{koptagel2022vaiphy} might have been set differently 
as the current default values of $\mathrm{brlenspr}$ are $\mathrm{Unconstrained:GammaDir}(1.0,0.100,1.0,1.0)$ \footnote{\url{https://github.com/NBISweden/MrBayes/blob/develop/doc/manual/Manual_MrBayes_v3.2.pdf}},
which deviates from the model assumption used for the benchmarks.
We observed that the line $\mathrm{brlenspr}$ was not included in the code provided in Appendix F of \cite{koptagel2022vaiphy}.
Having been able to replicate the results found in \cite{zhang2019variational},
we opted to use their values as a reference in Table \ref{tab:mll_values}. 

\subsection{Visualization of tree topologies}
In Fig. \ref{fig:trajectory}, we visualized the sum of the probability densities for tip node distribution $\sum_{i=1}^N Q(z_i)$ by projecting each hyperbolic coordinate $z_i \in \mathbb{H}^d$ onto the Poincar\'e coordinates $\overline{z}_{ik} = z_{ik} / \pars{1 + z_{i0}} \, (k=1, \dots, d)$,
then applying a transformation
$
\overline{z}_i
\mapsto
\overline{z}'_i = \tanh \pars{a \Verts{\overline{z}_i}_2}
\cdot 
\overline{z}_i
/ \Verts{\overline{z}_i}_2
$ with $a = 2.1$
to emphasize the central region.
To display the density $Q$ in the new coordinates,
the Jacobian term was also considered to evaluate the density
$Q(\overline{z}'_i)$.

For the comparison of consensus tree topologies,
we plotted the edges of the tree by connecting each of their end node coordinate pairs with a geodesic line.
The coordinate in $\mathbb{H}^d$ of the $i$-th tip node was determined as the location parameter $\mu_i \in \mathbb{H}^d$
of the wrapped normal distribution $Q(z_i) = \mathcal{WN}(z_i; \mu_i, \Sigma_i)$.
Let $\xi_u \in \mathbb{H}^d$ denotes
the coordinate of an interior node $u$,
we defined $\xi_u$ by using the Lorentzian centroid operation $\mathcal{C}$ \cite{law2019lorentzian} as follows:
\begin{align}
  \xi_{u} 
  &:= \mathcal{C}(\{ c_s \}_{s \in \mathcal{S}_\tau(u)}, \{ \nu_s \}_{s \in \mathcal{S}_\tau(u)})
  = \frac
    {\widetilde{\xi}_u}
    {\sqrt{- \angles{\widetilde{\xi}_u, \widetilde{\xi}_u}_{L}}},
  \label{apx:eq:iterior_coords}
\end{align}
where
$\widetilde{\xi}_u := \sum_{s \in \mathcal{S}_\tau(u)} \nu_s c_s$
denote an unnormalized sum of weighted coordinates,
$s \in \mathcal{S}_\tau(u)$
denote a subset of tip node indices
partitioned by the interior node $u$ in the tree topology $\tau$,
$c_s := \mathcal{C}(\{ \mu_i \}_{i \in s}, \{ 1 \}_{i \in s})$
denote the Lorentzian centroid of the tip nodes contained in the subset $s$,
and 
$\nu_s = N - |s|$ denote
the number of the tip nodes in the complement set of $s$
where
$|s|$ represents the number of tip nodes in the subset $s$.
As an unrooted tree topology $\tau$ can be identified by the set of tip node partitions introduced by the interior nodes of $\tau$,
the same unrooted tree topologies give the same set of interior coordinates $\{\xi_u\}_{u \in V}$ according to equation \eqref{apx:eq:iterior_coords}.

\section{Additional Results}
\label{apx:sec:add_exp}

\subsection{Marginal log-likelihood estimates throughout iterations for DS1 dataset}
\label{apx:sec:mll_iter_ds1}

In Fig. \ref{fig:learning_cond}, we highlight several key findings of training processes. Specifically,
(1) control variates were crucial for optimizations;
(2) The LAX estimator for individual Monte Carlo (MC) samples ($K=1$) and leave-one-out (LOO) CVs for multiple MC samples were effective;
(3) IW-ELBO and VIMCO were not particularly effective when a full diagonal matrix is used for $Q(z)$.
In Fig. \ref{apx:fig:learning_cond_normal} and
\ref{apx:fig:learning_cond_wrapped_normal},
we show that these findings hold consistently across various control variates and configurations of $Q(z)$,
including, different types of distributions (normal $\mathcal{N}$ and wrapped normal $\mathcal{WN}$), embedding dimensions ($d=2$, $3$, and $4$),
and varying numbers of MC samples ($K=1$, $2$, and $3$).

\begin{figure}[t]
  \centering
  \rule[-.5cm]{0cm}{4cm}
  \includegraphics[trim=5pt 25pt 5pt 0pt, clip, width=0.31\linewidth]{figs/learning_cond/ds1.s0.ua.n2.mcs1.mll_est.compare_cvs.pdf}
  \includegraphics[trim=5pt 25pt 5pt 0pt, clip, width=0.31\linewidth]{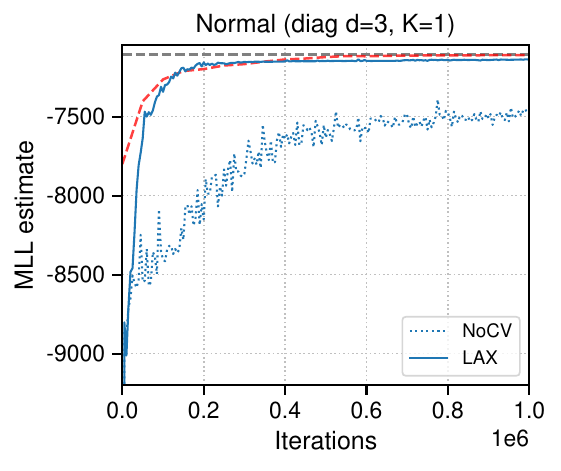}
  \includegraphics[trim=5pt 25pt 5pt 0pt, clip, width=0.31\linewidth]{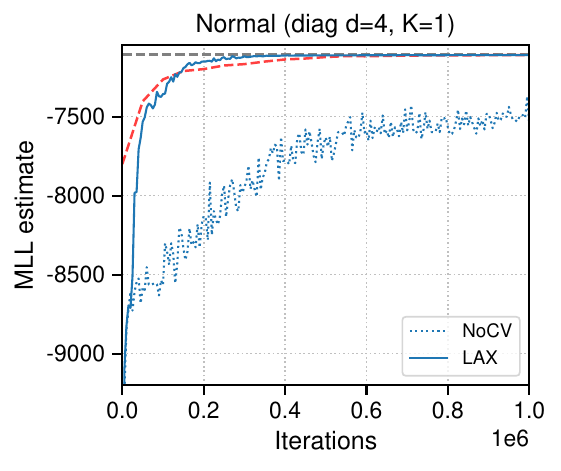}
  \\
  \includegraphics[trim=5pt 25pt 5pt 0pt, clip, width=0.31\linewidth]{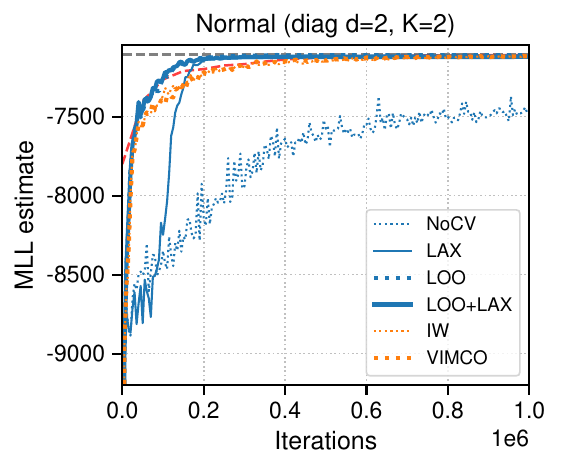}
  \includegraphics[trim=5pt 25pt 5pt 0pt, clip, width=0.31\linewidth]{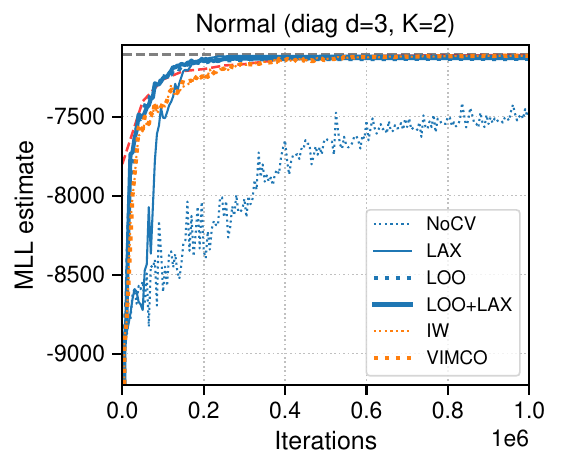}
  \includegraphics[trim=5pt 25pt 5pt 0pt, clip, width=0.31\linewidth]{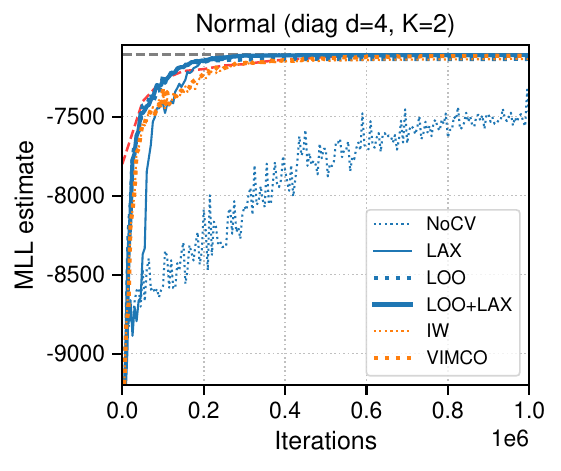}
  \\
  \includegraphics[trim=5pt 25pt 5pt 0pt, clip, width=0.31\linewidth]{figs/learning_cond/ds1.s0.ua.n2.mcs3.mll_est.compare_cvs.pdf}
  \includegraphics[trim=5pt 25pt 5pt 0pt, clip, width=0.31\linewidth]{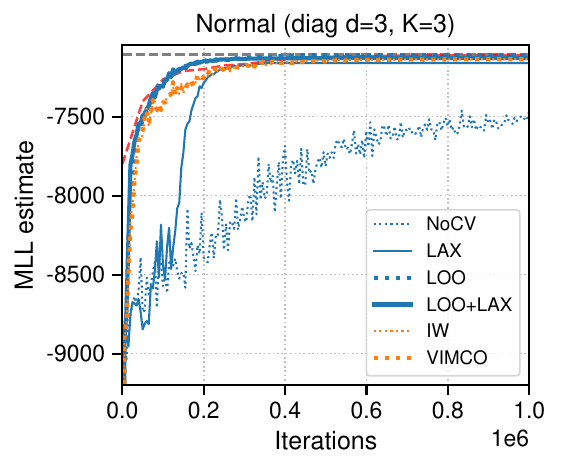}
  \includegraphics[trim=5pt 25pt 5pt 0pt, clip, width=0.31\linewidth]{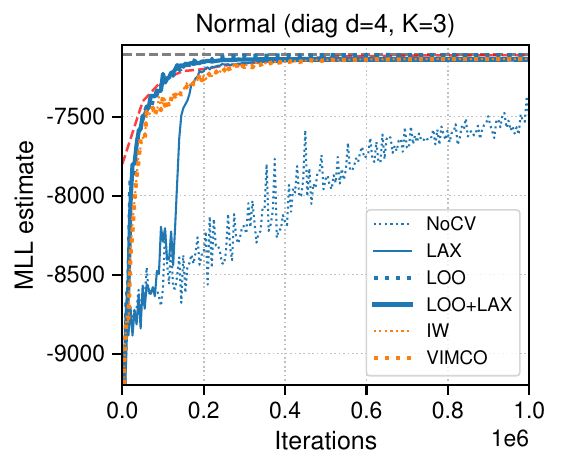}
  \\
  \includegraphics[trim=5pt 25pt 5pt 0pt, clip, width=0.31\linewidth]{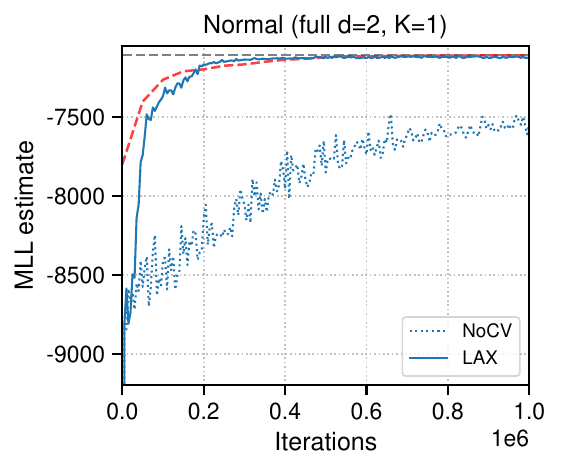}
  \includegraphics[trim=5pt 25pt 5pt 0pt, clip, width=0.31\linewidth]{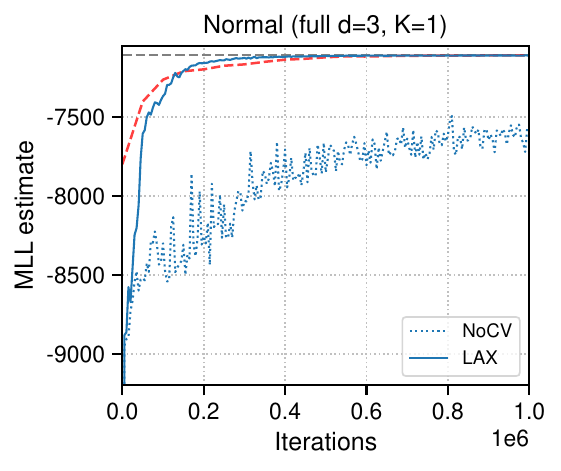}
  \includegraphics[trim=5pt 25pt 5pt 0pt, clip, width=0.31\linewidth]{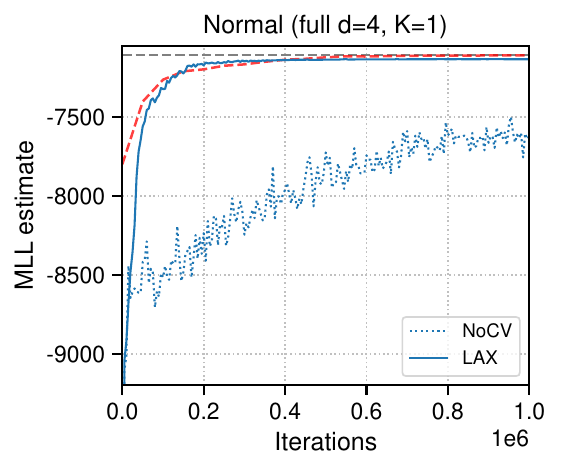}
  \\
  \includegraphics[trim=5pt 25pt 5pt 0pt, clip, width=0.31\linewidth]{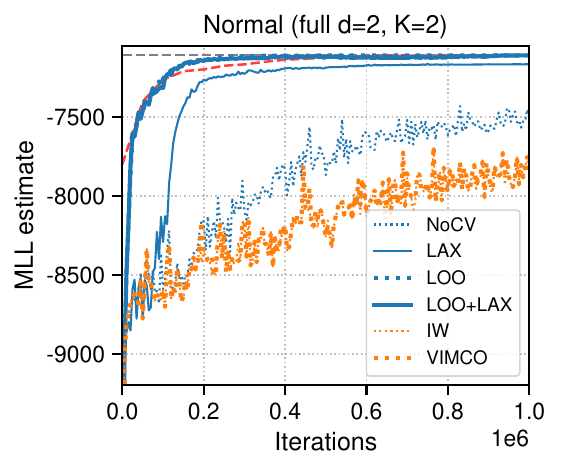}
  \includegraphics[trim=5pt 25pt 5pt 0pt, clip, width=0.31\linewidth]{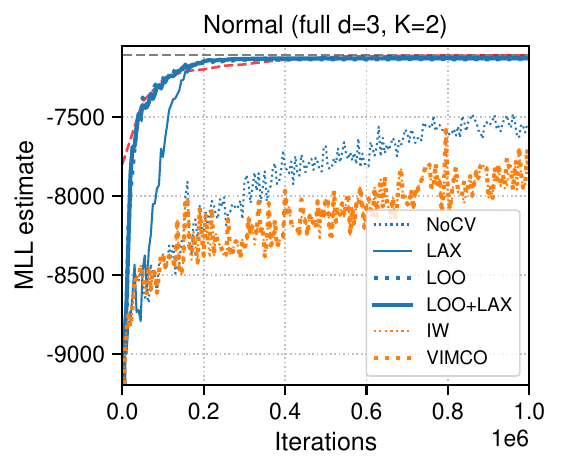}
  \includegraphics[trim=5pt 25pt 5pt 0pt, clip, width=0.31\linewidth]{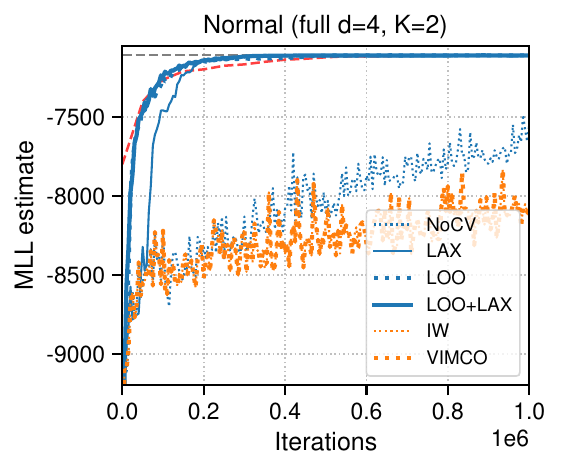}
  \\
  \includegraphics[trim=5pt 10pt 5pt 0pt, clip, width=0.31\linewidth]{figs/learning_cond/ds1.s0.ua.mvn2.mcs3.mll_est.compare_cvs.pdf}
  \includegraphics[trim=5pt 10pt 5pt 0pt, clip, width=0.31\linewidth]{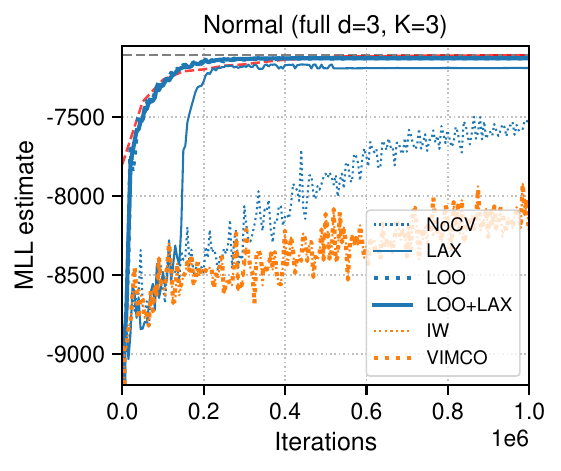}
  \includegraphics[trim=5pt 10pt 5pt 0pt, clip, width=0.31\linewidth]{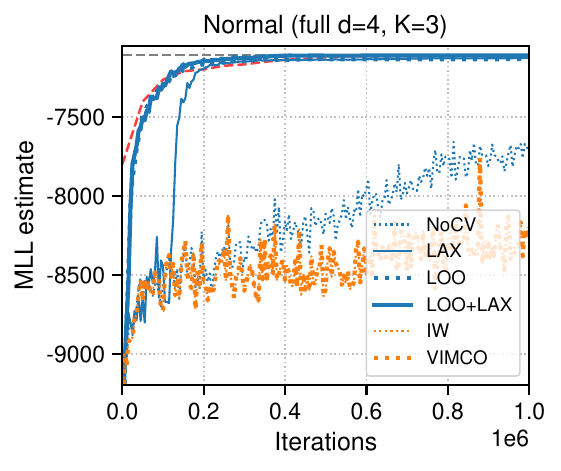}
   \rule[-.5cm]{0cm}{0cm}
  \caption{
    Comparison of marginal log-likelihood (MLL) estimates for DS1 dataset using different control variates and configurations of $Q(z)$.
    We configured $Q(z)$ to be an independent normal distribution of dimensions $d=2$, $3$, or $4$ with either a diagonal (diag) or full covariance matrix.
  The number of MC samples was set to $K=1$, $2$, or $3$.
  For a detailed explanation of the legend, refer to the caption of Fig. \ref{fig:learning_cond}.
  }
  \label{apx:fig:learning_cond_normal} %
\end{figure}

\begin{figure}[t]
  \centering
  \rule[-.5cm]{0cm}{4cm}
  \includegraphics[trim=5pt 25pt 5pt 0pt, clip, width=0.31\linewidth]{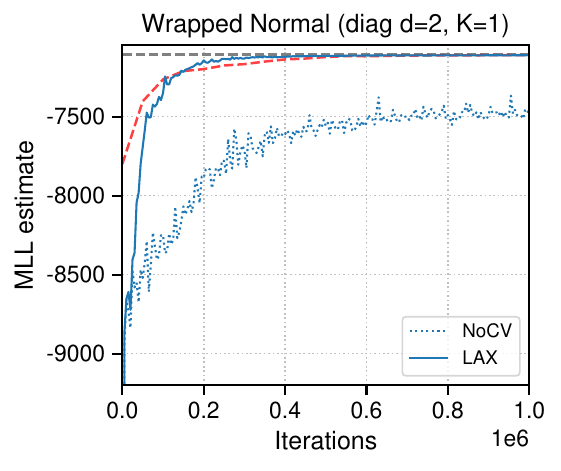}
  \includegraphics[trim=5pt 25pt 5pt 0pt, clip, width=0.31\linewidth]{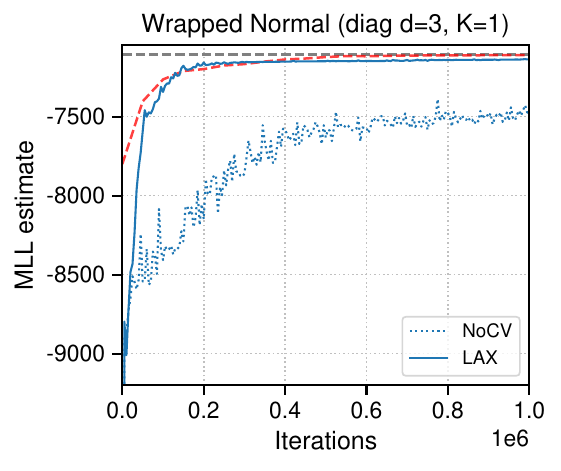}
  \includegraphics[trim=5pt 25pt 5pt 0pt, clip, width=0.31\linewidth]{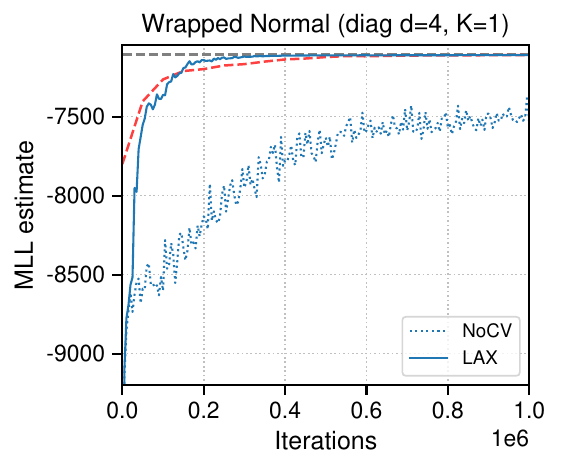}
  \\
  \includegraphics[trim=5pt 25pt 5pt 0pt, clip, width=0.31\linewidth]{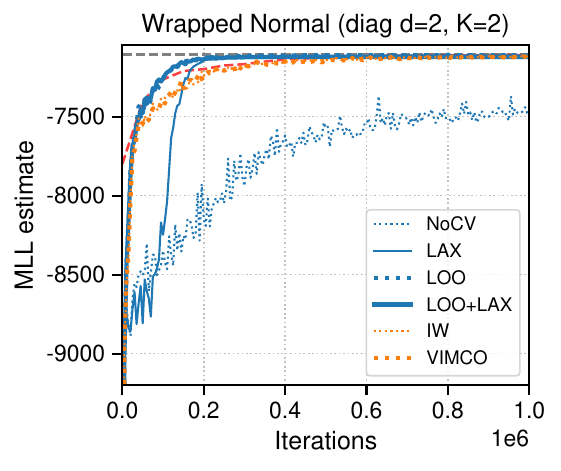}
  \includegraphics[trim=5pt 25pt 5pt 0pt, clip, width=0.31\linewidth]{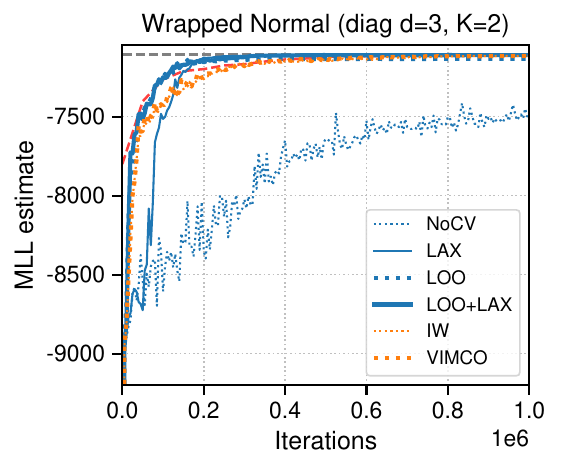}
  \includegraphics[trim=5pt 25pt 5pt 0pt, clip, width=0.31\linewidth]{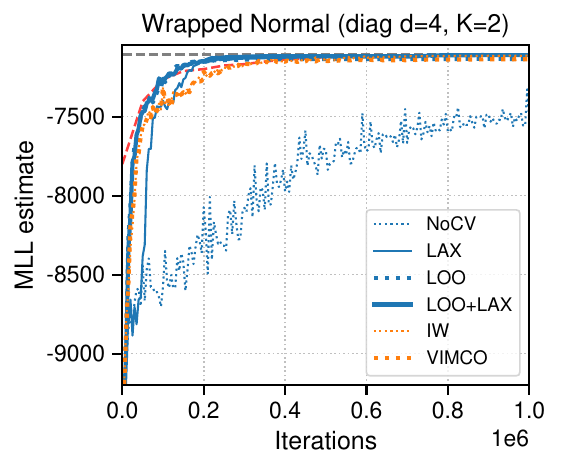}
  \\
  \includegraphics[trim=5pt 25pt 5pt 0pt, clip, width=0.31\linewidth]{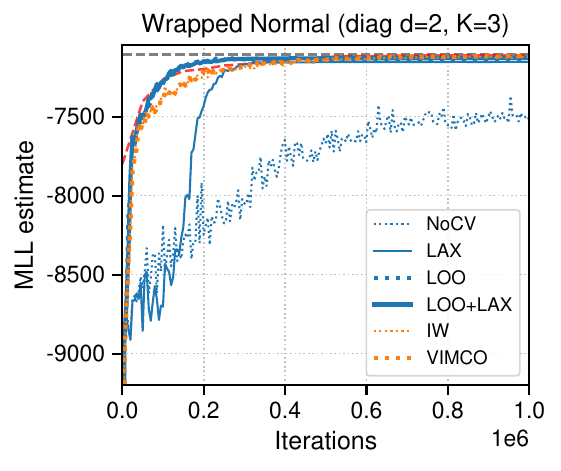}
  \includegraphics[trim=5pt 25pt 5pt 0pt, clip, width=0.31\linewidth]{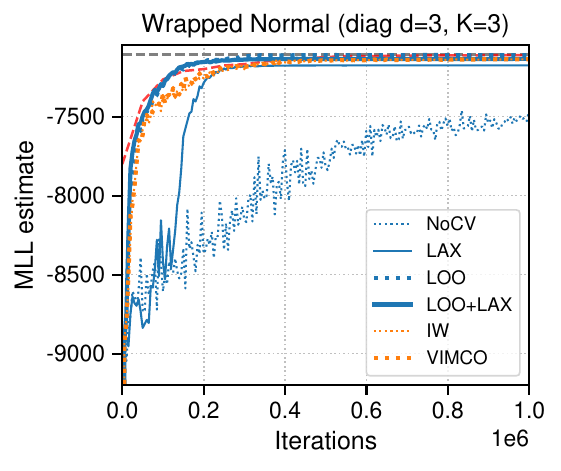}
  \includegraphics[trim=5pt 25pt 5pt 0pt, clip, width=0.31\linewidth]{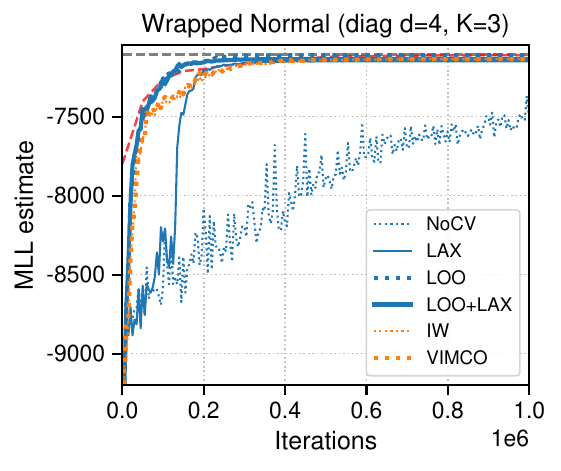}
  \\
  \includegraphics[trim=5pt 25pt 5pt 0pt, clip, width=0.31\linewidth]{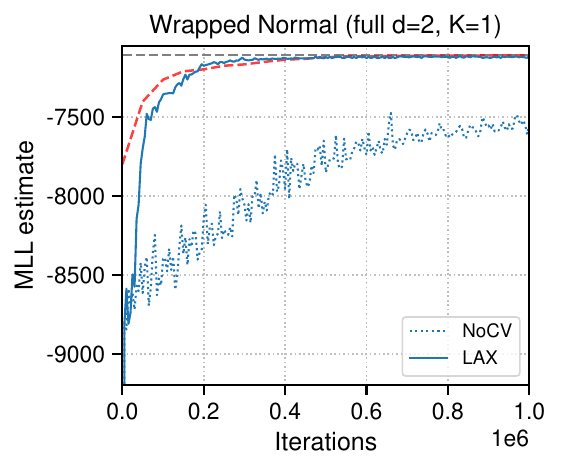}
  \includegraphics[trim=5pt 25pt 5pt 0pt, clip, width=0.31\linewidth]{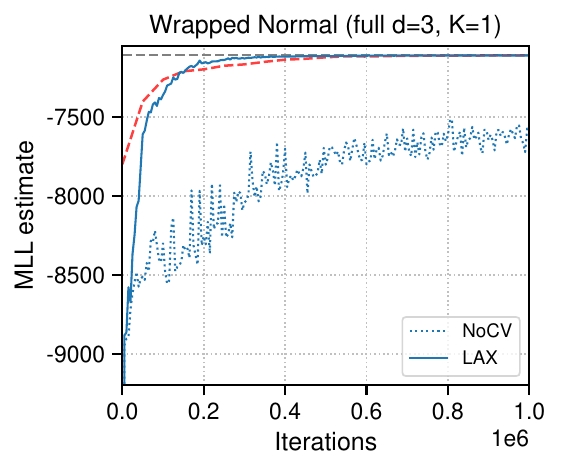}
  \includegraphics[trim=5pt 25pt 5pt 0pt, clip, width=0.31\linewidth]{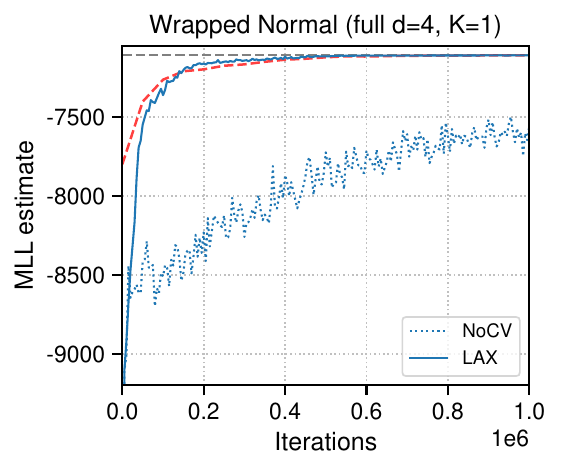}
  \\
  \includegraphics[trim=5pt 25pt 5pt 0pt, clip, width=0.31\linewidth]{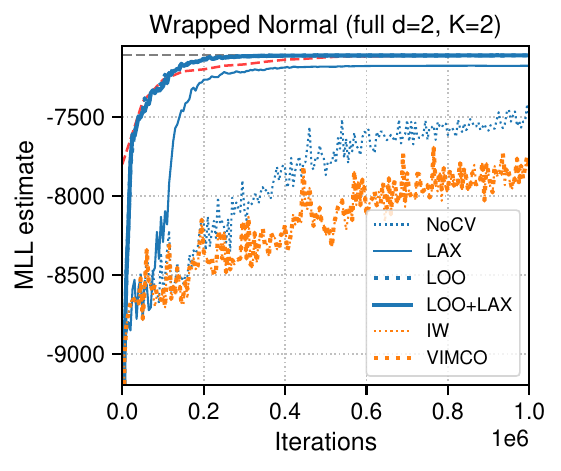}
  \includegraphics[trim=5pt 25pt 5pt 0pt, clip, width=0.31\linewidth]{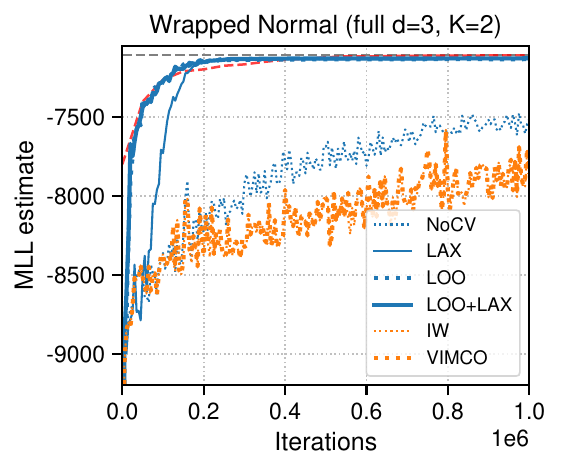}
  \includegraphics[trim=5pt 25pt 5pt 0pt, clip, width=0.31\linewidth]{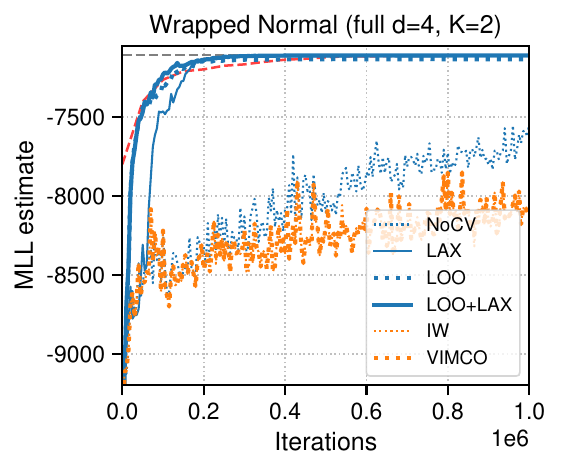}
  \\
  \includegraphics[trim=5pt 10pt 5pt 0pt, clip, width=0.31\linewidth]{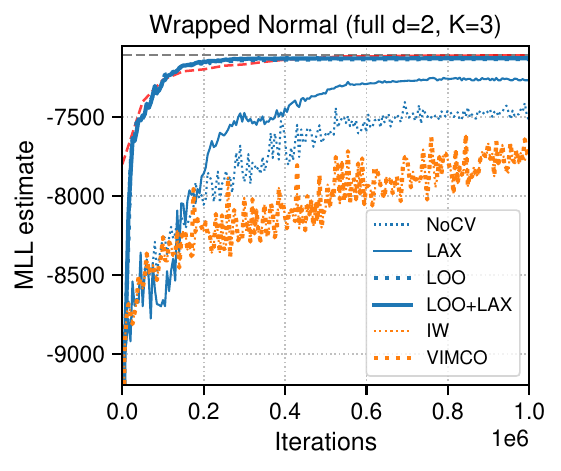}
  \includegraphics[trim=5pt 10pt 5pt 0pt, clip, width=0.31\linewidth]{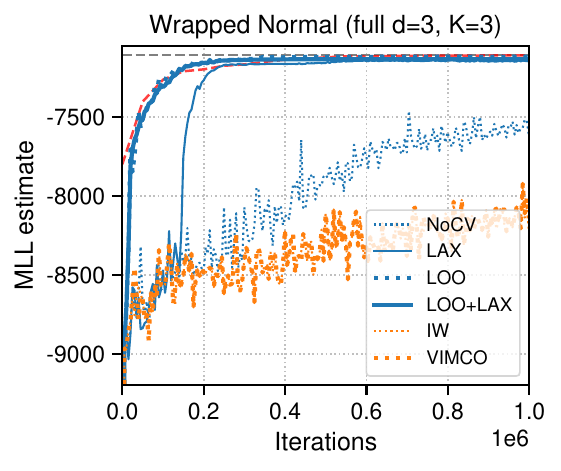}
  \includegraphics[trim=5pt 10pt 5pt 0pt, clip, width=0.31\linewidth]{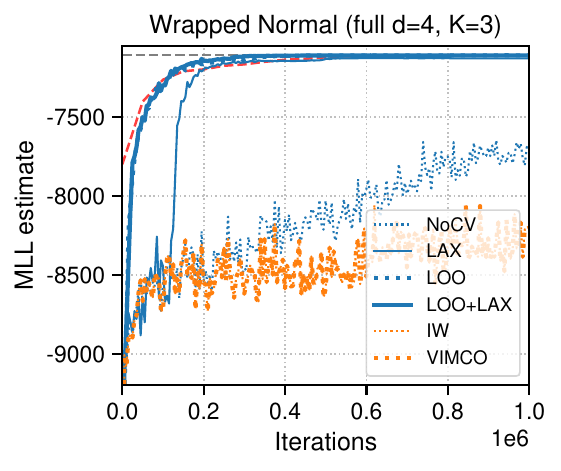}
   \rule[-.5cm]{0cm}{0cm}
  \caption{
    Comparison of marginal log-likelihood (MLL) estimates for DS1 dataset using different control variates and configurations of $Q(z)$.
    We employed wrapped normal distributions for $Q(z)$ instead of normal distributions shown in Fig. \ref{apx:fig:learning_cond_normal}.
    The dimensions $d=2$, $3$, or $4$ and the number of MC samples $K=1$, $2$, or $3$ are the same as in the referenced figure.
  }
  \label{apx:fig:learning_cond_wrapped_normal} %
\end{figure}

\begin{table}[htb]
  \centering
  \caption{
    Comparative results of the mean MLL estimates for DS1 dataset obtained with GeoPhy for different combinations of the distribution $Q(z)$, per-step Monte Carlo samples ($K$), and control variates (CVs).
    Each of the mean MLL values is obtained from five independent runs with 1000 MC samples at the last steps.
    Standard deviations are shown in parentheses.
    The bold and underlined numbers represent the best (highest) values in row-wise and column-wise comparisons, respectively.
  }
  \scalebox{0.8}{
  \begin{tabular}{lrrrrrrrrrrrr}
  \toprule
& \multicolumn{3}{c}{ $\mathcal{N}$ (diag) }
&\multicolumn{3}{c}{ $\mathcal{N}$ (full) }
\\
\cmidrule(r){2-4} \cmidrule(r){5-7}
\multicolumn{1}{c}{ CV } & \multicolumn{1}{c}{$d=2$}
& \multicolumn{1}{c}{$d=3$}
& \multicolumn{1}{c}{$d=4$}
& \multicolumn{1}{c}{$d=2$}
& \multicolumn{1}{c}{$d=3$}
& \multicolumn{1}{c}{$d=4$}
\\
\midrule
K=1 (LAX) & $-7126.79$ & $-7130.11$ & $-7123.95$ & $-7129.70$ & $\underline{ \bm{ -7119.51 } }$ & $-7120.03$
\\
& ($10.06$) & ($10.63$) & ($12.03$) & ($6.14$) & ($10.90$) & ($11.92$)
\\
K=2 (LOO) & $\underline{ -7122.84 }$ & $-7125.72$ & $-7121.79$ & $\underline{ -7121.23 }$ & $-7130.43$ & $\underline{ \bm{ -7114.92 } }$
\\
& ($11.96$) & ($13.08$) & ($12.52$) & ($14.62$) & ($10.82$) & ($8.32$)
\\
K=2 (LOO+LAX) & $-7129.26$ & $-7128.18$ & $\underline{ -7116.96 }$ & $-7121.74$ & $-7129.45$ & $\bm{ -7115.14 }$
\\
& ($8.35$) & ($9.83$) & ($10.39$) & ($10.73$) & ($10.23$) & ($8.26$)
\\
K=3 (LOO) & $-7130.60$ & $\underline{ -7124.61 }$ & $\bm{ -7120.88 }$ & $-7132.35$ & $-7128.26$ & $-7124.62$
\\
& ($10.66$) & ($12.21$) & ($13.15$) & ($6.89$) & ($9.80$) & ($12.33$)
\\
K=3 (LOO+LAX) & $-7128.36$ & $-7125.66$ & $\bm{ -7119.81 }$ & $-7122.76$ & $-7123.67$ & $-7123.37$
\\
& ($9.77$) & ($10.95$) & ($11.71$) & ($10.81$) & ($11.23$) & ($11.28$)
\\
\midrule
& \multicolumn{3}{c}{ $\mathcal{WN}$ (diag) }
&\multicolumn{3}{c}{ $\mathcal{WN}$ (full) }
\\
\cmidrule(r){2-4} \cmidrule(r){5-7}
\multicolumn{1}{c}{ CV } & \multicolumn{1}{c}{$d=2$}
& \multicolumn{1}{c}{$d=3$}
& \multicolumn{1}{c}{$d=4$}
& \multicolumn{1}{c}{$d=2$}
& \multicolumn{1}{c}{$d=3$}
& \multicolumn{1}{c}{$d=4$}
\\
\midrule
K=1 (LAX) & $-7126.89$ & $-7133.78$ & $-7122.10$ & $-7124.63$ & $\underline{ -7119.59 }$ & $\underline{ \bm{ -7111.55 } }$
\\
& ($10.06$) & ($13.57$) & ($12.29$) & ($8.09$) & ($10.84$) & ($0.07$)
\\
K=2 (LOO) & $-7121.13$ & $-7125.60$ & $-7121.80$ & $\underline{ \bm{ -7114.99 } }$ & $-7130.40$ & $-7116.14$
\\
& ($13.06$) & ($13.04$) & ($12.52$) & ($4.66$) & ($10.80$) & ($10.66$)
\\
K=2 (LOO+LAX) & $\underline{ -7120.90 }$ & $\underline{ -7120.96 }$ & $-7126.49$ & $-7119.81$ & $-7129.58$ & $\bm{ -7116.21 }$
\\
& ($10.10$) & ($13.08$) & ($12.01$) & ($9.78$) & ($10.24$) & ($10.75$)
\\
K=3 (LOO) & $-7130.67$ & $-7124.54$ & $\underline{ -7120.94 }$ & $-7125.94$ & $-7122.85$ & $\bm{ -7119.77 }$
\\
& ($10.67$) & ($12.20$) & ($13.11$) & ($13.07$) & ($11.96$) & ($11.80$)
\\
K=3 (LOO+LAX) & $-7128.40$ & $-7125.69$ & $-7125.78$ & $\bm{ -7115.19 }$ & $-7120.96$ & $-7116.09$
\\
& ($9.78$) & ($13.02$) & ($13.10$) & ($8.16$) & ($13.12$) & ($10.67$)
\\
\bottomrule
\end{tabular}
}
\label{tab:ds1_compare_cv_models}
\end{table}

\subsection{Marginal log-likelihood estimates for eight datasets}

We present more comprehensive results in Table \ref{apx:tab:mll_values_extend},
extending upon the data from Table \ref{tab:mll_values}.
This table showcases the marginal log-likelihood (MLL) estimates
obtained with various GeoPhy configurations and other conventional methods for the datasets DS1-DS8.
Once again, GeoPhy demonstrates its superior performance, consistently outperforming CSMC-based approaches that
do not require preselection of tree topologies across the majority of configurations and datasets.
This reaffirms the stability and excellence of our approach.
Additionally, we found that a $Q(z)$ configuration using a $4$-dimensional wrapped normal distribution with a full covariance matrix was the most effective among the tested configurations.

\begin{table}[htb]
  \centering
  \caption{
    Extended results of Table \ref{tab:mll_values} comparing the marginal log-likelihood (MLL) estimates with different approaches in eight benchmark datasets.
    The MLL values for MrBayes SS and VBPI-GNN
    were obtained from \cite{zhang2023learnable}, while
    CSMC, VCSMC, and $\phi$-CSMSC are referenced from \cite{koptagel2022vaiphy}.
    We also included replicated results for MrBayes SS.
    The MLL values for our approach (GeoPhy) are presented for a variety of $Q(z)$ configurations, 
    encompassing distribution types (normal $\mathcal{N}$ or wrapped normal $\mathcal{WN}$),
    embedding dimensions ($2$ or $4$),
    and the covariance matrix (full or diagonal).
    Each result features various CVs:
    LAX with $K=1$, LOO with $K=3$ denoted as LOO(3), 
    and a combination of LOO and LAX, denoted as LOO(3)+.
    The figures highlighted in bold represent the highest values obtained with GeoPhy and the three CSMC-based methods, all of which perform an approximate Bayesian inference without the preselection of topologies.
    We have underlined MLL estimates where GeoPhy outperformed the other CSMC-based methods.
  }
  \scalebox{0.72}{
  \begin{tabular}{crrrrrrrr}
  \toprule
  \multicolumn{1}{c}{ Dataset }
  & \multicolumn{1}{c}{ DS1 }
  & \multicolumn{1}{c}{ DS2 }
  & \multicolumn{1}{c}{ DS3 }
  & \multicolumn{1}{c}{ DS4 }
  & \multicolumn{1}{c}{ DS5 }
  & \multicolumn{1}{c}{ DS6 }
  & \multicolumn{1}{c}{ DS7 }
  & \multicolumn{1}{c}{ DS8 }
  \\
  \multicolumn{1}{c}{ Reference }
  & \multicolumn{1}{c}{ \cite{hedges1990tetrapod} }
  & \multicolumn{1}{c}{ \cite{garey1996molecular} }
  & \multicolumn{1}{c}{ \cite{yang2003comparison} }
  & \multicolumn{1}{c}{ \cite{henk2003laboulbeniopsis} }
  & \multicolumn{1}{c}{ \cite{lakner2008efficiency} }
  & \multicolumn{1}{c}{ \cite{zhang2001molecular} }
  & \multicolumn{1}{c}{ \cite{yoder2004divergence} }
  & \multicolumn{1}{c}{ \cite{rossman2001molecular} }
  \\
  \multicolumn{1}{c}{ \#Taxa ($N$) } &
  \multicolumn{1}{c}{ 27 } & \multicolumn{1}{c}{ 29 } & \multicolumn{1}{c}{ 36 } & \multicolumn{1}{c}{ 41 } & \multicolumn{1}{c}{ 50 } & \multicolumn{1}{c}{ 50 } & \multicolumn{1}{c}{ 59 } & \multicolumn{1}{c}{ 64 }
  \\
  \multicolumn{1}{c}{ \#Sites ($M$) } &
  \multicolumn{1}{c}{ 1949 } & \multicolumn{1}{c}{ 2520 } & \multicolumn{1}{c}{ 1812 } & \multicolumn{1}{c}{ 1137 } & \multicolumn{1}{c}{ 378 } & \multicolumn{1}{c}{ 1133 } & \multicolumn{1}{c}{ 1824 } & \multicolumn{1}{c}{ 1008 }
  \\
  \midrule 
  MrBayes SS &
$-7108.42$ & $-26367.57$ & $-33735.44$ & $-13330.06$ & $-8214.51$ & $-6724.07$ & $-37332.76$ & $-8649.88$
\\
\cite{ronquist2012mrbayes,zhang2019variational} &
($0.18$) & ($0.48$) & ($0.5$) & ($0.54$) & ($0.28$) & ($0.86$) & ($2.42$) & ($1.75$)
\\
(Replication) &
$-7107.81$ & $-26366.45$ & $-33732.79$ & $-13328.40$ & $-8209.17$ & $-6721.54$ & $-37331.85$ & $-8646.18$
\\
&
($0.25$) & ($0.40$) & ($0.63$) & ($0.48$) & ($0.46$) & ($0.77$) & ($3.08$) & ($1.19$)
\\
\midrule
VBPI-GNN &
$-7108.41$ & $-26367.73$ & $-33735.12$ & $-13329.94$ & $-8214.64$ & $-6724.37$ & $-37332.04$ & $-8650.65$
\\
\cite{zhang2023learnable} &
($0.14$) & ($0.07$) & ($0.09$) & ($0.19$) & ($0.38$) & ($0.4$) & ($0.26$) & ($0.45$)
\\
\midrule
\midrule
CSMC &
$-8306.76$ & $-27884.37$ & $-35381.01$ & $-15019.21$ & $-8940.62$ & $-8029.51$ & $-$ & $-11013.57$
\\
\cite{wang2015bayesian,koptagel2022vaiphy} &
($166.27$) & ($226.6$) & ($218.18$) & ($100.61$) & ($46.44$) & ($83.67$) & $-$ & ($113.49$)
\\
VCSMC &
$-9180.34$ & $-28700.7$ & $-37211.2$ & $-17106.1$ & $-9449.65$ & $-9296.66$ & $-$ & $-$
\\
\cite{moretti2021variational,koptagel2022vaiphy} &
($170.27$) & ($4892.67$) & ($397.97$) & ($362.74$) & ($2578.58$) & ($2046.7$) & $-$ & $-$
\\
$\phi$-CSMC &
$-7290.36$ & $-30568.49$ & $-33798.06$ & $-13582.24$ & $-8367.51$ & $-7013.83$ & $-$ & $-9209.18$
\\
\cite{koptagel2022vaiphy} &
($7.23$) & ($31.34$) & ($6.62$) & ($35.08$) & ($8.87$) & ($16.99$) & $-$ & ($18.03$)
\\
\midrule
$\mathcal{N}$(diag,2) &
$\underline{ -7126.79 }$ & $\underline{ -26440.54 }$ & $-33814.98$ & $\underline{ -13356.21 }$ & $\underline{ -8251.99 }$ & $\underline{ -6747.15 }$ & $\underline{ -37526.41 }$ & $\underline{ -8727.93 }$
\\
&
($10.06$) & ($28.78$) & ($20.31$) & ($9.48$) & ($9.43$) & ($15.21$) & ($66.28$) & ($43.33$)
\\
LOO(3) &
$\underline{ -7130.60 }$ & $\underline{ -26375.10 }$ & $\underline{ -33737.71 }$ & $\underline{ -13345.55 }$ & $\underline{ -8236.99 }$ & $\underline{ -6747.46 }$ & $\underline{ -37375.93 }$ & $\underline{ -8716.61 }$
\\
&
($10.66$) & ($11.75$) & ($2.32$) & ($3.26$) & ($6.29$) & ($6.90$) & ($28.86$) & ($26.32$)
\\
LOO(3)+ &
$\underline{ -7128.36 }$ & $\underline{ -26369.93 }$ & $\underline{ -33735.91 }$ & $\underline{ -13346.03 }$ & $\underline{ -8236.13 }$ & $\underline{ -6751.97 }$ & $\underline{ -37430.82 }$ & $\underline{ -8691.38 }$
\\
&
($9.77$) & ($0.25$) & ($0.13$) & ($4.54$) & ($5.71$) & ($11.24$) & ($67.19$) & ($10.70$)
\\
$\mathcal{N}$(diag,4) &
$\underline{ -7123.95 }$ & $\underline{ -26382.91 }$ & $\underline{ -33762.45 }$ & $\underline{ -13341.62 }$ & $\underline{ -8241.07 }$ & $\underline{ -6735.78 }$ & $\underline{ -37396.04 }$ & $\underline{ -8679.48 }$
\\
&
($12.03$) & ($16.89$) & ($10.68$) & ($3.64$) & ($6.56$) & ($5.25$) & ($26.39$) & ($27.55$)
\\
LOO(3) &
$\underline{ -7120.88 }$ & $\underline{ -26368.53 }$ & $\underline{ -33736.04 }$ & $\underline{ -13338.99 }$ & $\underline{ -8238.16 }$ & $\underline{ -6735.59 }$ & $\underline{ -37357.86 }$ & $\underline{ -8665.54 }$
\\
&
($13.15$) & ($0.05$) & ($0.09$) & ($6.08$) & ($0.52$) & ($4.51$) & ($10.76$) & ($5.66$)
\\
LOO(3)+ &
$\underline{ -7119.81 }$ & $\underline{ -26368.49 }$ & $\underline{ -33735.92 }$ & $\underline{ -13339.79 }$ & $\underline{ -8236.69 }$ & $\underline{ -6736.74 }$ & $\underline{ -37353.08 }$ & $\underline{ -8665.99 }$
\\
&
($11.71$) & ($0.10$) & ($0.15$) & ($4.55$) & ($4.70$) & ($3.55$) & ($16.97$) & ($7.53$)
\\
\midrule
$\mathcal{N}$(full,2) &
$\underline{ -7129.70 }$ & $\underline{ -26487.71 }$ & $-33807.05$ & $\underline{ -13353.30 }$ & $\underline{ -8251.01 }$ & $\underline{ -6750.00 }$ & $\underline{ -37487.49 }$ & $\underline{ -8736.81 }$
\\
&
($6.14$) & ($54.79$) & ($22.97$) & ($5.92$) & ($10.34$) & ($11.91$) & ($50.43$) & ($52.38$)
\\
LOO(3) &
$\underline{ -7132.35 }$ & $\underline{ -26391.00 }$ & $\underline{ -33736.98 }$ & $\underline{ -13347.17 }$ & $\underline{ -8237.75 }$ & $\underline{ -6752.46 }$ & $\underline{ -37462.07 }$ & $\underline{ -8684.38 }$
\\
&
($6.89$) & ($11.88$) & ($1.89$) & ($7.77$) & ($6.08$) & ($8.64$) & ($54.40$) & ($7.98$)
\\
LOO(3)+ &
$\underline{ -7122.76 }$ & $\underline{ -26380.59 }$ & $\underline{ -33736.93 }$ & $\underline{ -13343.21 }$ & $\underline{ -8239.96 }$ & $\underline{ -6753.84 }$ & $\underline{ -37419.02 }$ & $\underline{ -8691.96 }$
\\
&
($10.81$) & ($14.39$) & ($2.10$) & ($2.14$) & ($4.84$) & ($14.30$) & ($35.94$) & ($13.51$)
\\
$\mathcal{N}$(full,4) &
$\underline{ -7120.03 }$ & $\underline{ -26378.55 }$ & $\underline{ -33753.20 }$ & $\underline{ -13342.27 }$ & $\underline{ -8237.33 }$ & $\underline{ -6734.51 }$ & $\underline{ -37373.32 }$ & $\underline{ -8662.53 }$
\\
&
($11.92$) & ($11.05$) & ($3.03$) & ($2.71$) & ($5.41$) & ($1.95$) & ($10.08$) & ($4.58$)
\\
LOO(3) &
$\underline{ -7124.62 }$ & $\underline{ -26368.49 }$ & $\underline{ -33736.03 }$ & $\underline{ -13337.74 }$ & $\underline{ -8234.18 }$ & $\underline{ -6734.49 }$ & $\underline{ -37347.46 }$ & $\underline{ -8666.63 }$
\\
&
($12.33$) & ($0.13$) & ($0.16$) & ($1.71$) & ($6.11$) & ($3.14$) & ($11.93$) & ($7.86$)
\\
LOO(3)+ &
$\underline{ -7123.37 }$ & $\underline{ -26368.51 }$ & $\underline{ -33735.99 }$ & $\underline{ \bm{ -13337.06 } }$ & $\underline{ -8241.25 }$ & $\underline{ -6734.63 }$ & $\underline{ -37352.30 }$ & $\underline{ -8666.39 }$
\\
&
($11.28$) & ($0.09$) & ($0.05$) & ($1.45$) & ($8.15$) & ($2.18$) & ($12.32$) & ($7.54$)
\\
\midrule
$\mathcal{WN}$(diag,2) &
$\underline{ -7126.89 }$ & $\underline{ -26444.84 }$ & $-33823.74$ & $\underline{ -13358.16 }$ & $\underline{ -8251.45 }$ & $\underline{ -6745.60 }$ & $\underline{ -37516.88 }$ & $\underline{ -8719.44 }$
\\
&
($10.06$) & ($27.91$) & ($15.62$) & ($9.79$) & ($9.72$) & ($8.36$) & ($69.88$) & ($60.54$)
\\
LOO(3) &
$\underline{ -7130.67 }$ & $\underline{ -26380.41 }$ & $\underline{ -33737.75 }$ & $\underline{ -13346.94 }$ & $\underline{ -8239.36 }$ & $\underline{ -6741.63 }$ & $\underline{ -37382.28 }$ & $\underline{ -8690.41 }$
\\
&
($10.67$) & ($14.40$) & ($2.48$) & ($4.25$) & ($4.62$) & ($3.23$) & ($31.96$) & ($15.92$)
\\
LOO(3)+ &
$\underline{ -7128.40 }$ & $\underline{ -26375.28 }$ & $\underline{ -33736.91 }$ & $\underline{ -13347.32 }$ & $\underline{ -8235.41 }$ & $\underline{ -6742.40 }$ & $\underline{ -37411.28 }$ & $\underline{ -8683.22 }$
\\
&
($9.78$) & ($11.78$) & ($1.91$) & ($4.42$) & ($5.70$) & ($1.94$) & ($56.74$) & ($13.13$)
\\
$\mathcal{WN}$(diag,4) &
$\underline{ -7122.10 }$ & $\underline{ -26381.84 }$ & $\underline{ -33759.19 }$ & $\underline{ -13342.81 }$ & $\underline{ -8243.92 }$ & $\underline{ \bm{ -6733.38 } }$ & $\underline{ -37369.36 }$ & $\underline{ -8666.85 }$
\\
&
($12.29$) & ($17.18$) & ($9.95$) & ($3.45$) & ($6.74$) & ($0.79$) & ($13.45$) & ($10.63$)
\\
LOO(3) &
$\underline{ -7120.94 }$ & $\underline{ -26368.52 }$ & $\underline{ -33735.98 }$ & $\underline{ -13339.77 }$ & $\underline{ -8236.42 }$ & $\underline{ -6735.12 }$ & $\underline{ \bm{ -37341.92 } }$ & $\underline{ -8673.15 }$
\\
&
($13.11$) & ($0.03$) & ($0.08$) & ($3.84$) & ($3.63$) & ($2.52$) & ($9.15$) & ($0.97$)
\\
LOO(3)+ &
$\underline{ -7125.78 }$ & $\underline{ -26368.51 }$ & $\underline{ -33736.00 }$ & $\underline{ -13342.38 }$ & $\underline{ -8235.03 }$ & $\underline{ -6736.20 }$ & $\underline{ -37345.80 }$ & $\underline{ -8666.68 }$
\\
&
($13.10$) & ($0.10$) & ($0.18$) & ($6.35$) & ($5.36$) & ($1.91$) & ($11.13$) & ($5.78$)
\\
\midrule
$\mathcal{WN}$(full,2) &
$\underline{ -7124.63 }$ & $\underline{ -26458.50 }$ & $-33804.63$ & $\underline{ -13358.16 }$ & $\underline{ -8251.09 }$ & $\underline{ -6748.78 }$ & $\underline{ -37484.98 }$ & $\underline{ -8717.27 }$
\\
&
($8.09$) & ($32.42$) & ($20.76$) & ($1.20$) & ($7.46$) & ($7.38$) & ($34.91$) & ($28.49$)
\\
LOO(3) &
$\underline{ -7125.94 }$ & $\underline{ -26391.02 }$ & $\underline{ -33736.98 }$ & $\underline{ -13344.13 }$ & $\underline{ -8236.90 }$ & $\underline{ -6753.86 }$ & $\underline{ -37416.00 }$ & $\underline{ -8684.90 }$
\\
&
($13.07$) & ($11.89$) & ($1.96$) & ($0.22$) & ($5.13$) & ($10.68$) & ($3.12$) & ($12.81$)
\\
LOO(3)+ &
$\underline{ -7115.19 }$ & $\underline{ -26385.21 }$ & $\underline{ -33736.97 }$ & $\underline{ -13343.95 }$ & $\underline{ -8239.55 }$ & $\underline{ -6747.61 }$ & $\underline{ -37431.76 }$ & $\underline{ -8683.54 }$
\\
&
($8.16$) & ($13.77$) & ($1.93$) & ($1.76$) & ($4.72$) & ($6.87$) & ($43.65$) & ($3.57$)
\\
$\mathcal{WN}$(full,4) &
$\underline{ \bm{ -7111.55 } }$ & $\underline{ -26379.48 }$ & $\underline{ -33757.79 }$ & $\underline{ -13342.71 }$ & $\underline{ -8240.87 }$ & $\underline{ -6735.14 }$ & $\underline{ -37377.86 }$ & $\underline{ -8663.51 }$
\\
&
($0.07$) & ($11.60$) & ($8.07$) & ($1.61$) & ($9.80$) & ($2.64$) & ($29.48$) & ($6.85$)
\\
LOO(3) &
$\underline{ -7119.77 }$ & $\underline{ \bm{ -26368.44 } }$ & $\underline{ -33736.01 }$ & $\underline{ -13339.26 }$ & $\underline{ -8234.06 }$ & $\underline{ -6733.91 }$ & $\underline{ -37350.77 }$ & $\underline{ -8671.32 }$
\\
&
($11.80$) & ($0.13$) & ($0.03$) & ($3.19$) & ($7.53$) & ($0.57$) & ($11.74$) & ($5.99$)
\\
LOO(3)+ &
$\underline{ -7116.09 }$ & $\underline{ -26368.54 }$ & $\underline{ \bm{ -33735.85 } }$ & $\underline{ -13337.42 }$ & $\underline{ \bm{ -8233.89 } }$ & $\underline{ -6735.90 }$ & $\underline{ -37358.96 }$ & $\underline{ \bm{ -8660.48 } }$
\\
&
($10.67$) & ($0.12$) & ($0.12$) & ($1.32$) & ($6.63$) & ($1.13$) & ($13.06$) & ($0.78$)
\\
\bottomrule
\end{tabular}
}
\label{apx:tab:mll_values_extend}
\end{table}

\subsection{Analysis of tree topology distributions}
\label{apx:sec:topology_dists}

To assess the accuracy and expressivity of GeoPhy's tree topology distribution,
we contrasted $Q(\tau)$ with MCMC tree samples.
For our analysis, we considered experimental runs using LAX estimators with either $K=1$ or $3$.
For a configuration of $Q(z)$, We employed a diagonal or multivariate wrapped normal distribution with $2$, $4$, $5$, and $6$ dimensions.
We sampled 1000 MC trees from $Q(\tau)$ for each experiment and then deduced the majority consensus tree.
As depicted in Fig. \ref{apx:fig:bipartitions} across the eight datasets, the Robinson-Foulds (RF) distances between the majority consensus tree of $Q(\tau)$ and the one obtained with MrBayes were well aligned with the MLL values.

To highlight the expressivity limitation of $Q(\tau)$ obtained from a simple independent distribution of $Q(z)$,
we compiled three statistics related to the tree topology samples for DS1, DS4, and DS7 datasets (Table \ref{apx:tab:diversity_stats}),
wherein we employed the model with the least RF distance for each dataset.
For DS1 and DS4, GeoPhy's tree topologies were less diverse than those of MrBayes and VBPI-GNN and had more concentration on the most frequent topology.
For more diffused tree DS7, GeoPhy also showed diverse tree samples with a close diversity index to the other two.

We also summarize the frequency of bipartitions observed for tree topology samples in Fig. \ref{apx:fig:bipartitions}.
The intermediate values between 0 and 1 in this frequency plot reveal topology diversities, where VBPI-GNN aligns more closely with MrBayes compared to GeoPhy.
For DS1 and DS4, GeoPhy tends to take values near zero and one in the slope region.
For DS7, while GeoPhy traced the curve more accurately, fluctuations around this curve highlight potential room for improvement.

\begin{figure}[ht]
  \centering
  \rule[-.5cm]{0cm}{4cm}
  \includegraphics[
    trim=8pt 0pt 5pt 0pt, clip, width=0.226\linewidth]{
      figs/mll_vs_rf/mll_vs_rf.ds1.pdf}
  \includegraphics[
    trim=17pt 0pt 10pt 0pt, clip, width=0.24\linewidth]{
      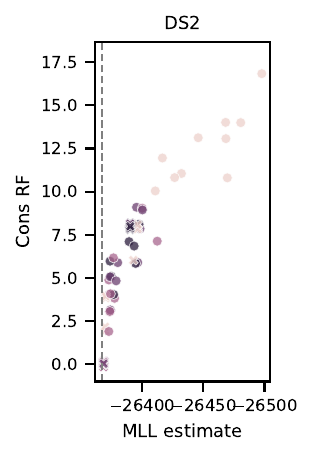}
  \includegraphics[
    trim=17pt 0pt 5pt 0pt, clip, width=0.208\linewidth]{
      figs/mll_vs_rf/mll_vs_rf.ds3.pdf}
  \includegraphics[
    trim=17pt 0pt 5pt 0pt, clip, width=0.208\linewidth]{
      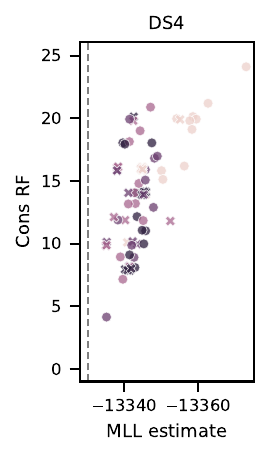}
  \\
  \includegraphics[
    trim=7pt 0pt 5pt 0pt, clip, width=0.24\linewidth]{
      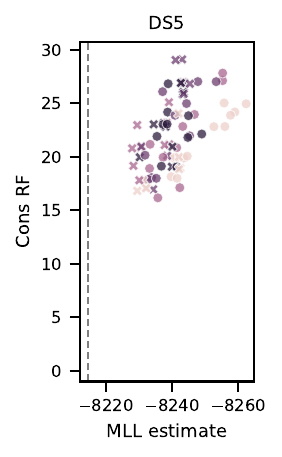}
  \includegraphics[
    trim=17pt 0pt 10pt 0pt, clip, width=0.23\linewidth]{
      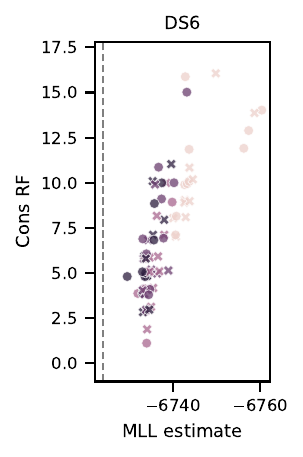}
  \includegraphics[
    trim=20.5pt 0pt 5pt 0pt, clip, width=0.208\linewidth]{
      figs/mll_vs_rf/mll_vs_rf.ds7.pdf}
  \includegraphics[
    trim=17pt 0pt 5pt 0pt, clip, width=0.22\linewidth]{
      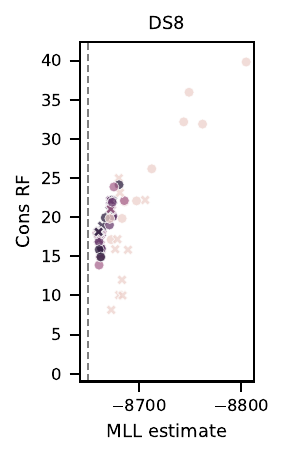}
  \caption{
    Marginal log-likelihood (MLL) estimates and Robinson-Foulds (RF) distance between the consensus trees obtained from topology distribution $Q(\tau)$ and MrBayes (MCMC) for datasets DS1-DS8.
    Each dot denotes an individual experimental run. 
    The term 'dim' refers to the dimension of wrapped normal distributions used for $Q(z)$. 
    To avoid dot overlap in the plot, RF metrics are randomly jittered within $\pm 0.2$.
    The dashed line illustrates the MLL value estimated by MrBayes.
  }
  \label{apx:fig:mll_rf_dists} %
\end{figure}

\begin{figure}[ht]
  \centering
  \rule[-.5cm]{0cm}{4cm}
  \includegraphics[trim=0pt 0pt 0pt 0pt, clip, width=0.217\linewidth]{figs/posteriors/bi_freqs3.ds1.pdf}
  \includegraphics[trim=0pt 0pt 0pt 0pt, clip, width=0.31\linewidth]{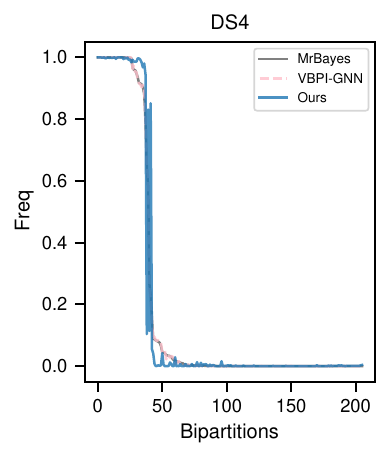}
  \includegraphics[trim=0pt 0pt 0pt 0pt, clip, width=0.318\linewidth]{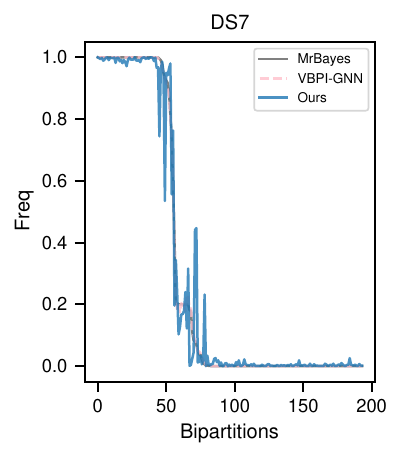}
  \caption{
    Comparison of bipartition frequencies from the posterior tree topology distributions of MrBayes, VBPI-GNN, and GeoPhy (ours), for DS1, DS4, and DS7 datasets.
    The bipartitions were ordered by descending frequency as seen in MrBayes.
  }
  \label{apx:fig:bipartitions} %
\end{figure}

\begin{table}[h]
\centering
\caption{
  Comparative results of tree topology diversity statistics for DS1, DS4, and DS7 datasets using MrBayes, VBPI-GNN, and GeoPhy. 
  Statistics presented include (a) the Simpson's diversity index, (b) the frequency of the most frequent topology, and (c) the number of topologies accounting for the top 95\% cumulative frequency.
}
\label{apx:tab:diversity_stats}
\begin{tabular}{ccccc}
\toprule
Dataset & Statistics & MrBayes & VBPI-GNN & GeoPhy \\
\hline
    & (a) & 0.874 & 0.891 & 0.362 \\
DS1 & (b) & 0.268 & 0.313 & 0.793 \\
    & (c) & 42    & 44    & 11 \\
\hline
    & (a) & 0.895 & 0.895 & 0.678 \\
DS4 & (b) & 0.277 & 0.285 & 0.554 \\
    & (c) & 208   & 90    & 58 \\
\hline
    & (a) & 0.988 & 0.991 & 0.996 \\
DS7 & (b) & 0.022 & 0.028 & 0.022 \\
    & (c) & 753   & 315   & 553 \\
\bottomrule
\end{tabular}
\end{table}

\begin{figure}[t]
    \centering
    \includegraphics[trim=5pt 0pt 5pt 0pt, clip, width=0.26\linewidth]{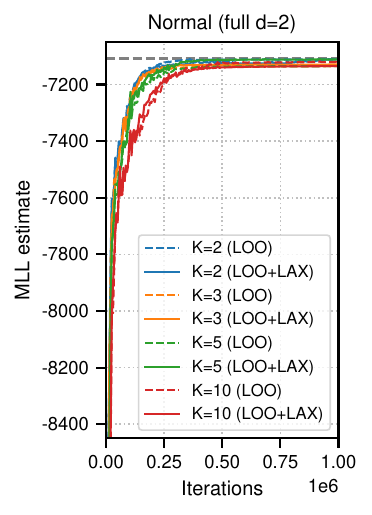}
    \rule[-8.5cm]{0cm}{0cm}
    \includegraphics[
      trim=43pt 0pt 5pt 0pt, clip, width=0.201\linewidth]{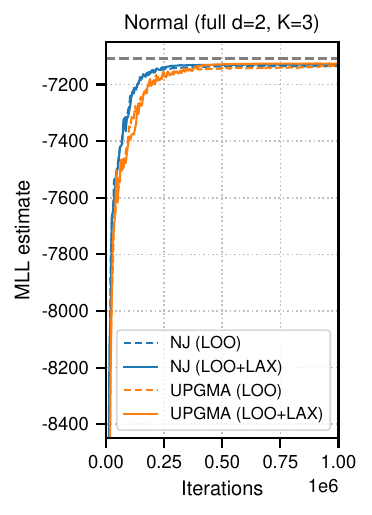}
    \rule[-8.5cm]{0cm}{0cm}
    \vspace{-8.5cm}
    \caption{
      {\bf Left}: Comparison of convergence curves for different per-step MC samples ($K=2,3,5,10$) in dataset DS1.
      {\bf Right}: Comparison of convergence in dataset DS1 for different link functions, NJ and UPGMA, that map coordinates into tree topologies $\tau(z)$.
  }
    \vspace{-1.5cm}
  \label{fig:run_conds} %
\end{figure}

\subsection{Runtime comparison}

We measured CPU runtimes using a single thread on a 20-core Xeon processor. 
In Fig. \ref{fig:runtimes},
we compared single-core runtime of MrBayes, VPBI-GNN, and GeoPhy (ours).
As MrBayes is written in C and highly optimized its per-iteration computation time is much faster than current variational methods.
In general, the per-step efficiency of GeoPhy is comparable to that of VBPI-GNN.
Although the convergence of MLL estimates tends to be slow in terms of the number of consumed samples for larger $K$ (Fig. \ref{fig:run_conds} Left),
the total CPU-time slightly decreases as $K$ increases (Fig. \ref{fig:runtimes} Left).
We also included the estimated runtime across the eight datasets and the different 
model configurations, where the number of species ranges from $27$ to $64$ (Fig. \ref{fig:runtimes} Right).

\begin{figure}[t]
    \centering
    \includegraphics[trim=7pt 0pt 5pt 0pt, clip, width=0.255\linewidth]{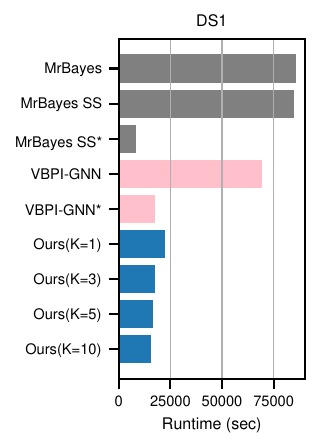}
    \rule[-8.5cm]{0cm}{0cm}
    \includegraphics[trim=7pt 0pt 5pt 0pt, clip,
     width=0.205\linewidth]{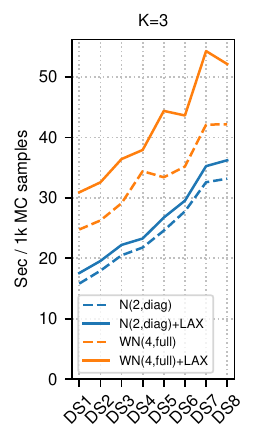}
    \rule[-8.5cm]{0cm}{0cm}
    \vspace{-8.5cm}
    \caption{
      {\bf Left}: Single-core runtime comparison of MrBayes, VBPI-GNN, and GeoPhy (ours).
      MrBayes SS* represents one of ten runs used for MrBayes SS.
      VBPI-GNN is trained for 400k iterations with $K=10$ MC samples.
      For VBPI-GNN* and GeoPhy (ours), we represent runtimes for training with 1M MC samples.
      For GeoPhy, we employed a 2d-diagonal normal distribution for $Q(z)$ and LOO+LAX for control variates (CVs).
      {\bf Right}: CPU-time per 1k MC samples for two types of distribution $Q(z)$ across eight datasets. We compared LOO or LOO+LAX for CVs.
  }
    \vspace{-1.5cm}
  \label{fig:runtimes} %
\end{figure}

\end{document}